\documentclass[sigconf]{acmart}
\usepackage{microtype}
\usepackage{graphicx}
\usepackage{subfigure}
\usepackage{booktabs} 
\usepackage{enumitem}
\usepackage{xcolor}
\usepackage{multirow}
\usepackage[table]{xcolor}
\usepackage{amsmath}       
\usepackage{amsfonts}     
\usepackage[most]{tcolorbox}

\definecolor{algocyan}{RGB}{0, 150, 150}
\definecolor{beaublue}{rgb}{0.74, 0.83, 0.9}

\usepackage{amsmath} 
\usepackage{mathtools}
\usepackage{amsthm}
\usepackage{algorithmic}
\usepackage{algorithm}
\usepackage[capitalize,noabbrev]{cleveref}

\theoremstyle{plain}
\newtheorem{theorem}{Theorem}[section]

\theoremstyle{definition}

\theoremstyle{remark}

\usepackage[textsize=tiny]{todonotes}
\AtBeginDocument{%
  }

\setcopyright{acmlicensed}
\copyrightyear{2026}
\acmYear{2026}
\setcopyright{cc}
\setcctype{by}
\acmConference[KDD '26]{Proceedings of the 32nd ACM SIGKDD Conference on Knowledge Discovery and Data Mining V.2}{August 09--13, 2026}{Jeju Island, Republic of Korea}
\acmBooktitle{Proceedings of the 32nd ACM SIGKDD Conference on Knowledge Discovery and Data Mining V.2 (KDD '26), August 09--13, 2026, Jeju Island, Republic of Korea}
\acmDOI{10.1145/3770855.3817617}
\acmISBN{979-8-4007-2259-2/2026/08}
\definecolor{softblue}{RGB}{240, 244, 250}
\definecolor{softsage}{RGB}{242, 248, 245}
\definecolor{highlightgold}{RGB}{255, 250, 240}
\begin{document}

%%
%% The "title" command has an optional parameter,
%% allowing the author to define a "short title" to be used in page headers.
\title{LC-ERD: Mining Latent Logic for Self-Evolving Reasoning via Consistency-Regulated Reward Decomposition}

%%
%% The "author" command and its associated commands are used to define
%% the authors and their affiliations.
%% Of note is the shared affiliation of the first two authors, and the
%% "authornote" and "authornotemark" commands
%% used to denote shared contribution to the research.
\author{Yanyu Chen}
\authornote{Both authors contributed equally to this research.}
\affiliation{%
  \institution{The Chinese University of Hong Kong}
  \city{Hong Kong SAR}
  \country{China}}
\email{chenyanyu.cse@link.cuhk.edu.hk}

\author{Jiyue Jiang}
\authornotemark[1]
\affiliation{%
  \institution{The Chinese University of Hong Kong}
  \city{Hong Kong SAR}
  \country{China}}
\email{jiangjy@link.cuhk.edu.hk}

\author{Dianzhi Yu}
\affiliation{%
  \institution{The Chinese University of Hong Kong}
  \city{Hong Kong SAR}
  \country{China}}
\email{dianzhi.yu@link.cuhk.edu.hk}

\author{Zheng Wu}
\affiliation{%
  \institution{Shanghai Jiaotong University}
  \city{Shanghai}
  \country{China}}
\email{wzh815918208@sjtu.edu.cn}

\author{Jiahong Liu}
\affiliation{%
  \institution{The Chinese University of Hong Kong}
  \city{Hong Kong SAR}
  \country{China}}
\email{jiahong.21@gmail.com}

\author{Jiaming Han}
\affiliation{%
  \institution{The Chinese University of Hong Kong}
  \city{Hong Kong SAR}
  \country{China}}
\email{hanjiaming@link.cuhk.edu.hk}

\author{Xiao Guo}
\affiliation{%
  \institution{Fudan University}
  \city{Shanghai}
  \country{China}}
\email{guox26@mail2.sysu.edu.cn}

\author{Jinhu Qi}
\affiliation{%
  \institution{The Chinese University of Hong Kong}
  \city{Hong Kong SAR}
  \country{China}}
\email{jinhuqi@link.cuhk.edu.hk}

\author{Yu Li}
\affiliation{%
  \institution{The Chinese University of Hong Kong}
  \city{Hong Kong SAR}
  \country{China}}
\email{liyu@cse.cuhk.edu.hk}

\author{Yifei Zhang}
\affiliation{%
  \institution{The Chinese University of Hong Kong}
  \city{Hong Kong SAR}
  \country{China}}
\email{yifeiacc@gmail.com}

\author{Irwin King}
\authornote{Corresponding author.}
\affiliation{%
  \institution{The Chinese University of Hong Kong}
  \city{Hong Kong SAR}
  \country{China}}
\email{king@cse.cuhk.edu.hk}

%%
%% By default, the full list of authors will be used in the page
%% headers. Often, this list is too long, and will overlap
%% other information printed in the page headers. This command allows
%% the author to define a more concise list
%% of authors' names for this purpose.
\renewcommand{\shortauthors}{Chen et al.}

%%
%% The abstract is a short summary of the work to be presented in the
%% article.
% \begin{abstract}
% The self-alignment of Large Language Model (LLM) reasoning is a critical challenge, as conventional endogenous rewards often fail to capture the structural nuances of complex logic. Current self-evolution faces three primary challenges: (1) a \textit{mimetic bias} where rewards prioritize statistical likelihood over logical truth, creating a ``correctness illusion'' that masks compounding reasoning errors; (2) a \textit{credit assignment bottleneck} where sparse global outcomes fail to provide granular guidance for intermediate steps; and (3) the \textit{failure of endogenous signals} to generalize across open-domain tasks without amplifying pre-training biases. To address these gaps, we introduce \textbf{LC-ERD} (Logic-Consistent Endogenous Reward Decomposition), a framework that holistically aligns problem-solving trajectories through variational consensus. By deriving a \textit{Variational Logic Potential} from \textit{Latent Logic Expertise} (LLE), we quantify structural soundness alongside probability to counter mimetic bias. To resolve credit assignment, LC-ERD introduces a \textit{Multi-Agent Value Decomposition} protocol based on the IGM principle, quantifying individual step utility within the joint reasoning process. Experiments show LC-ERD delivers a robust self-evolution path that uncovers critical trade-offs between logic consistency and accuracy entirely missed by standard endogenous rewards.
% \end{abstract}
\begin{abstract}
The evolution of Large Language Model (LLM) reasoning is bottlenecked by the scarcity of high-quality process data. While self-alignment via endogenous rewards offers a solution, mining valid supervision faces three challenges: (1) \textit{Label Noise via Mimetic Bias}, where rewards prioritize statistical likelihood over logical truth, creating a ``correctness illusion'' that masks compounding errors; (2) \textit{Coarse-Grained Supervision}, where sparse global outcomes (e.g., in GRPO) fail to provide granular guidance, treating reasoning chains as monolithic; and (3) \textit{Distributional Collapse}, where signals fail to generalize without amplifying pre-training biases. To address these, we introduce \textbf{LC-ERD} (Logic-Consistent Endogenous Reward Decomposition), a framework framing self-alignment as latent structure mining. We derive a \textit{Variational Logic Potential} by aggregating consensus from the model's \textit{Latent Logic Expertise} (LLE) to denoise the reasoning manifold, and introduce a \textit{Multi-Agent Value Decomposition} protocol based on the IGM principle to quantify individual step utility. Experiments show LC-ERD delivers a robust self-evolution path, uncovering trade-offs between logic consistency and accuracy while identifying high-value reasoning patterns missed by standard rewards. Our code is available at \url{https://github.com/LC-ERD-repo/LC-ERD}.
\end{abstract}

%%
%% The code below is generated by the tool at http://dl.acm.org/ccs.cfm.
%% Please copy and paste the code instead of the example below.
%%
\begin{CCSXML}
<ccs2012>
   <concept>
       <concept_id>10010147.10010257.10010258.10010261</concept_id>
       <concept_desc>Computing methodologies~Reinforcement learning</concept_desc>
       <concept_significance>500</concept_significance>
       </concept>
 </ccs2012>
\end{CCSXML}

\ccsdesc[500]{Computing methodologies~Reinforcement learning}

% \begin{CCSXML}
% <ccs2012>
%  <concept>
%   <concept_id>00000000.0000000.0000000</concept_id>
%   <concept_desc>Computing methodologies, Machine learning</concept_desc>
%   % <concept_significance>500</concept_significance>
%  </concept>
%  % <concept>
%  %  <concept_id>00000000.00000000.00000000</concept_id>
%  %  <concept_desc>Do Not Use This Code, Generate the Correct Terms for Your Paper</concept_desc>
%  %  <concept_significance>300</concept_significance>
%  % </concept>
%  % <concept>
%  %  <concept_id>00000000.00000000.00000000</concept_id>
%  %  <concept_desc>Do Not Use This Code, Generate the Correct Terms for Your Paper</concept_desc>
%  %  <concept_significance>100</concept_significance>
%  % </concept>
%  % <concept>
%  %  <concept_id>00000000.00000000.00000000</concept_id>
%  %  <concept_desc>Do Not Use This Code, Generate the Correct Terms for Your Paper</concept_desc>
%  %  <concept_significance>100</concept_significance>
%  % </concept>
% </ccs2012>
% \end{CCSXML}

% \ccsdesc[500]{Do Not Use This Code~Generate the Correct Terms for Your Paper}
% \ccsdesc[300]{Do Not Use This Code~Generate the Correct Terms for Your Paper}
% \ccsdesc{Do Not Use This Code~Generate the Correct Terms for Your Paper}
% \ccsdesc[100]{Do Not Use This Code~Generate the Correct Terms for Your Paper}

%%
%% Keywords. The author(s) should pick words that accurately describe
%% the work being presented. Separate the keywords with commas.
\keywords{Self-Evolving Reasoning, Credit Assignment, Endogenous Reward}
%% A "teaser" image appears between the author and affiliation
%% information and the body of the document, and typically spans the
%% page.
% \begin{teaserfigure}
%   \includegraphics[width=\textwidth]{sampleteaser}
%   \caption{Seattle Mariners at Spring Training, 2010.}
%   \Description{Enjoying the baseball game from the third-base
%   seats. Ichiro Suzuki preparing to bat.}
%   \label{fig:teaser}
% \end{teaserfigure}

% \received{20 February 2007}
% \received[revised]{12 March 2009}
% \received[accepted]{5 June 2009}

%%
%% This command processes the author and affiliation and title
%% information and builds the first part of the formatted document.
\maketitle

\section{Introduction}
\label{intro}
% \begin{figure*}[h]
% \centering\includegraphics[width=0.95\textwidth]{pic1 conv.pdf}
% % \vspace{-3mm}
% \caption{\textbf{The Intuitive Mechanism of LC-ERD: Navigating the Logic-Consistent Reasoning Manifold.} 
% (a) Traditional SFT models often fall into the \textbf{Mimetic Trap}, where high local token probabilities mask a cumulative \textbf{Logic Trap} due to quadratic error growth $\mathcal{O}(H^2)$. 
% (b) Our \textbf{LLE Discovery} process identifies the \textbf{Logic Backbone} from latent states, serving as a \textbf{Guidance Beam} to define the \textbf{Logic-Consistent Reasoning Manifold (LCRM)}. 
% (c) \textbf{LC-ERD Alignment} applies a \textbf{Corrective Force} through potential-based reward shaping $R = \tilde{r} + \gamma \Delta \Phi_{LLE}$, anchoring trajectories to the expert manifold and achieving a linear sub-optimality bound $\mathcal{O}(H)$ for robust long-chain reasoning.}
%     \label{fig:motivation}
% % \vspace{-5mm}
% \end{figure*}

The rapid advancement of Large Language Models (LLMs) has significantly extended their problem-solving capabilities, enabling autonomous problem-solving across diverse domains ranging from formal mathematics to clinical diagnosis~\cite{wang2025large, jiang2025artificial, wang2026human}. 
Central to this progress is the paradigm of self-alignment~\cite{sun2023principle, Jiang2026APA}, where models leverage endogenous reward signals---theoretically equivalent to solving an offline inverse reinforcement learning (IRL)~\cite{jarboui2021offline,ng2000algorithms} objective---to iteratively refine their reasoning capabilities without the prohibitive cost of human-in-the-loop annotation. By eliciting these latent value functions~\cite{wu2025quick}, LLMs can theoretically achieve a closed-loop self-evolution, identifying and reinforcing high-quality trajectories within their own output distributions.

\begin{figure}[t]
\centering\includegraphics[width=8.3cm]{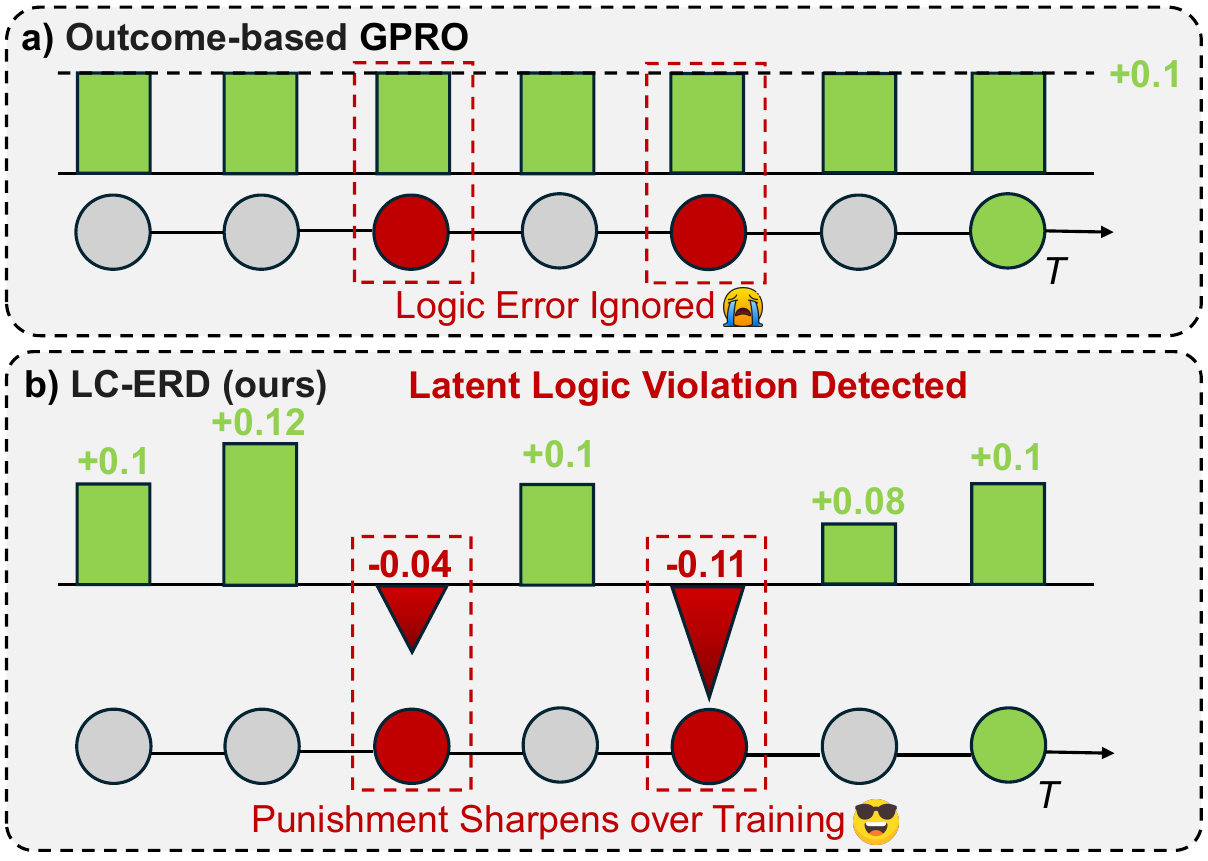}
% \vspace{-3mm}
\caption{\textbf{Comparison of Reward Mechanisms.} 
(a) Outcome-based Supervision (e.g., GRPO) assigns uniform credit ($+0.1$) based solely on the final answer. This creates a ``Correctness Illusion'', where intermediate logic errors (red nodes) are inadvertently reinforced.
(b) LC-ERD (Ours) mines latent logic to decompose global rewards. It detects Latent Logic Violations, applying granular penalties (e.g., $-0.11$) to structural flaws to distinguish valid reasoning from ``lucky guesses''.}
\label{fig:small}
% \vspace{-5mm}
\end{figure}

Despite the allure of such self-supervised evolution~\cite{vinhas2024towards}, mining high-fidelity supervision from the model's own generations remains a significant challenge. Current state-of-the-art endogenous approaches, such as Group Relative Policy Optimization (GRPO)~\cite{shao2024deepseekmath}, rely on coarse-grained outcome supervision. While effective for short horizons, these monolithic signals suffer from a \textit{Granularity Mismatch}: they assign uniform positive credit to all steps in a successful chain, indiscriminately reinforcing significant deductions alongside ``lucky guesses'' or even logical hallucinations. This lack of discrimination creates a \textit{Correctness Illusion} (\textbf{Figure \ref{fig:small}a}), where intermediate logic errors (visualized as red nodes) are inadvertently mined as valid patterns due to label noise. To escape this \textit{Mimetic Trap}, we argue that self-evolution must shift from blind outcome matching to \textbf{fine-grained structure mining}. Specifically, we must decompose the global reward to detect and penalize \textit{Latent Logic Violations} (\textbf{Figure \ref{fig:small}b}), ensuring that credit is dynamically attributed, assigning granular penalties (negative potentials) to structural flaws while isolating and reinforcing the true logic backbone.

% \begin{figure*}[h]
% \centering\includegraphics[width=0.95\textwidth]{pic1 conv.pdf}
% % \vspace{-3mm}
% \caption{\textbf{The Intuitive Mechanism of LC-ERD: Navigating the Logic-Consistent Reasoning Manifold.} 
% (a) Traditional SFT models often fall into the \textbf{Mimetic Trap}, where high local token probabilities mask a cumulative \textbf{Logic Trap} due to quadratic error growth $\mathcal{O}(H^2)$. 
% (b) Our \textbf{LLE Discovery} process identifies the \textbf{Logic Backbone} from latent states, serving as a \textbf{Guidance Beam} to define the \textbf{Logic-Consistent Reasoning Manifold (LCRM)}. 
% (c) \textbf{LC-ERD Alignment} applies a \textbf{Corrective Force} through potential-based reward shaping $R = \tilde{r} + \gamma \Delta \Phi_{LLE}$, anchoring trajectories to the expert manifold and achieving a linear sub-optimality bound $\mathcal{O}(H)$ for robust long-chain reasoning.}
%     \label{fig:motivation}
% % \vspace{-5mm}
% \end{figure*}

To address this challenge, we shift away from the pursuit of a monolithic, imitation-based reward signal, instead optimizing the \textbf{Logic-Consistent Endogenous Reward Decomposition (LC-ERD)}---a framework that models the reasoning trajectory as a sequence of discrete agentic decisions driven by a continuous logic potential. We introduce LC-ERD to reformulate the self-alignment process into a collaborative game among sequential ``step-agents,'' governed by the Individual-Global-Max (IGM) principle. Our framework samples logic-consistent reward signals from a latent expert manifold, delivering high-fidelity feedback that is tailored to the cognitive difficulty of each reasoning step. By decomposing the global terminal reward into dense, consistency-regulated intrinsic potentials, LC-ERD effectively resolves the credit assignment bottleneck that plagues traditional endogenous alignment while bypassing the limitations of static, one-size-fits-all reward functions.

The core of LC-ERD lies in two synergistic mechanisms: (1) \textbf{Latent Logic Expertise (LLE) Discovery}, which extracts a ``logical consensus'' from multi-path samplings to serve as an endogenous anchor; and (2) \textbf{Variational Logic Potential (VLP)}, a continuous reward-shaping function that penalizes logical drifts by quantifying the divergence between the current policy and the discovered consensus. Unlike existing handcrafted or automated systems that rely on static rules or expensive external judges, LC-ERD adapts its corrective strength based on the model’s internal uncertainty. This allows for a more disciplined collaboration between the model's generative capacity and its evaluative intuition, ensuring that the self-evolution path remains anchored to logical truth even in the absence of external supervision.

Comprehensive evaluation across mathematical reasoning (MATH-500~\cite{mayilvahanan2025math}, GSM8K~\cite{zhong2026achieving}) and specialized medical dialogue benchmarks demonstrates that LC-ERD (I) requires zero human preference data or external LLM calls, (II) surpasses standard endogenous reward baselines by 8.5\% $\sim$ 14.2\% in reasoning accuracy, and (III) enjoys superior robustness against logical hallucination in long-chain trajectories. Our theoretical analysis further confirms that LC-ERD compresses the sub-optimality bound~\cite{bozkurt2026sub} of self-evolution by a factor of $(1-\delta)$, establishing a provably more stable convergence path. The main contributions of this work are as follows:

\begin{itemize}[leftmargin=*, topsep=0pt]
    \item \textbf{Formal Deconstruction of Reasoning Inhibitors.} We identify the \textit{mimetic bias} and \textit{credit assignment bottleneck} as the primary inhibitors of self-evolving reasoning, providing a formal deconstruction of why monolithic endogenous rewards fail in long-chain tasks.
    \item \textbf{Logic-Consistent Reward Decomposition Framework.} We introduce \textbf{LC-ERD}, an automated framework that factorizes global reasoning utility into consistency-driven individual rewards, implementing the IGM principle for token-level self-alignment.
    \item \textbf{Automated Logical Anchor Elicitation.} We propose the \textbf{Latent Logic Expertise (LLE)} algorithm, a training-free protocol to elicit logical anchors from model distributions, effectively bypassing the limitations of manual rule design.
    \item \textbf{Broad Empirical and Theoretical Validation.} We demonstrate through extensive experiments that LC-ERD significantly improves both the performance and the reliability of LLM self-evolution, outperforming state-of-the-art baselines across diverse logical and healthcare domains.
\end{itemize}

\section{Methodology}
\label{method}
\begin{figure*}[h]
\centering\includegraphics[width=\textwidth]{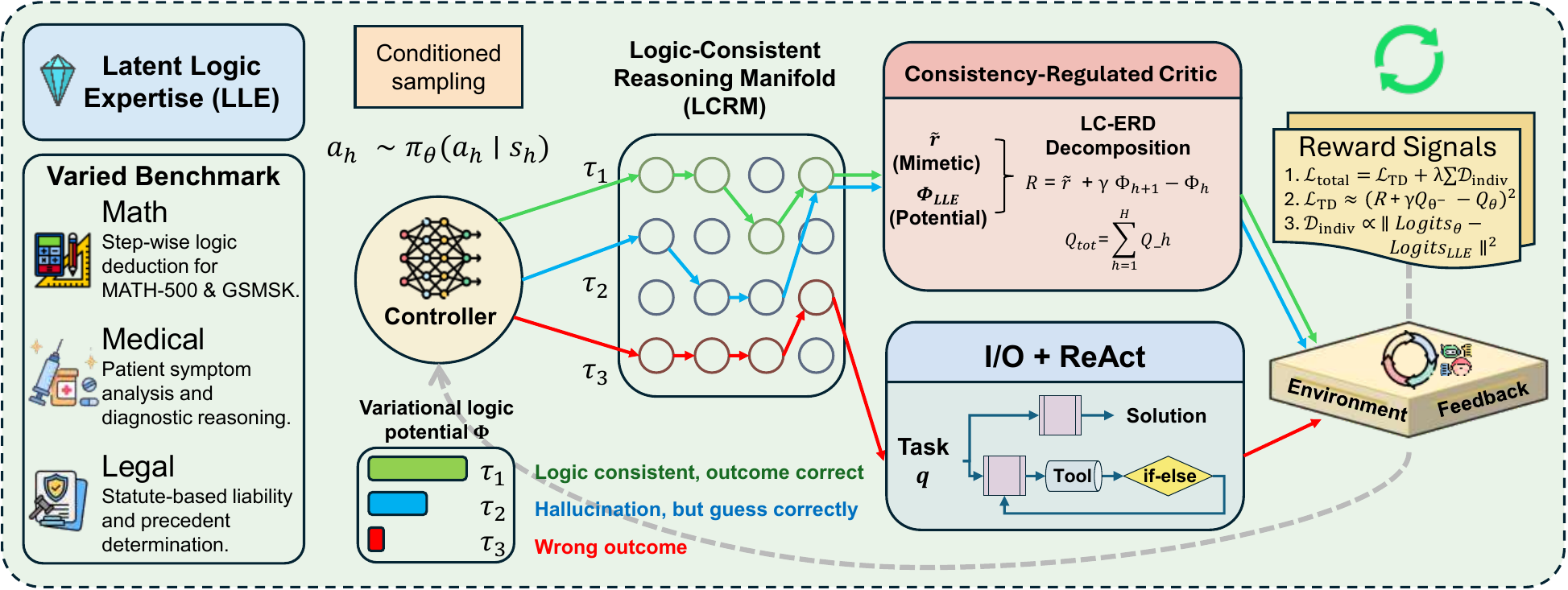}
\caption{\textbf{The LC-ERD Framework Architecture and Training Paradigm.} (a) \textbf{Latent Logic Expertise (LLE)} is elicited via Conditioned Sampling to construct the Logic-Consistent Reasoning Manifold (LCRM) across diverse benchmarks (Math, Medical, Legal). (b) The \textbf{LC-ERD Decomposition} module translates internal states into a dual-component reward $R = \tilde{r} + \gamma(\Phi_{h+1} - \Phi_h)$, where $\tilde{r}$ captures mimetic patterns and $\Phi$ represents the variational logic potential. (c) The \textbf{Training Objective} combines a Temporal Difference loss $\mathcal{L}_{TD}$ for value stability with a logit-alignment term $\mathcal{D}_{indiv}$, enabling the model to self-evolve by minimizing the divergence between its reasoning path and the latent logical ground-truth.}
    \label{fig:framework}
% \vspace{-5mm}
\end{figure*}

In this section, we present the technical framework of \textbf{LC-ERD}. As illustrated in \textbf{Figure \ref{fig:framework}}, our framework decouples the self-alignment process into latent expertise discovery, reward decomposition, and a multi-agent joint optimization paradigm. Standard endogenous rewards treat self-alignment as a straightforward imitation task. In contrast, LC-ERD reformulates reasoning as a multi-agent coordination~\cite{zhang2020bi} game. We bridge the gap between mimetic probability and logical truth by optimizing a consistency-regulated potential field that adaptively penalizes logical drifts.

\subsection{Reasoning as a Cooperative Endogenous MDP}

To provide a rigorous foundation for reward decomposition, we formalize the reasoning process of an LLM as a finite-horizon, discrete-time Markov Decision Process (MDP)~\cite{garcia2013markov} denoted by $\mathcal{M} = \langle \mathcal{S}, \mathcal{A}, \mathcal{P}, r, \gamma, H \rangle$. For a given query $x$, the state at step $h \in [1, H]$ is defined as $s_h = [x, a_1, \dots, a_{h-1}] \in \mathcal{S}$, which encapsulates the initial prompt and the partial reasoning trajectory generated so far. The action $a_h \in \mathcal{V}$ corresponds to the selection of a token from the model's vocabulary $\mathcal{V}$ at step $h$. The transition function $\mathcal{P}: \mathcal{S} \times \mathcal{A} \rightarrow \mathcal{S}$ is deterministic, such that:
\begin{equation}
s_{h+1} = [s_h, a_h].
\end{equation}
We assume the base model $\pi_{\text{base}}$ implicitly satisfies the soft Bellman consistency~\cite{xie2021bellman} during its pre-training, rendering its logits $f(s, a)$ equivalent to a latent soft Q-function~\cite{kang2025diffusion} $\hat{Q}(s, a)$. Under the maximum entropy~\cite{jaynes1982rationale} RL framework, the soft state-value function $V(s)$ is thus defined as:
\begin{equation}
V(s) = \alpha \log \sum_{a' \in \mathcal{V}} \exp(\hat{Q}(s, a') / \alpha),
\end{equation}
% \begin{quote}
% \textit{Remark 1: The equivalence between the pre-trained logit $f(s,a)$ and the latent soft Q-function $\hat{Q}(s,a)$ is non-trivial. We provide a formal derivation in \textbf{Appendix A.1}, proving that under the Maximum Entropy principle, the MLE objective implicitly learns a centered Q-value field~\cite{li2025branchgrpo}.}
% \end{quote}
where $\alpha \in \mathbb{R}^+$ denotes the temperature parameter controlling entropy-regularization; and $\hat{Q}(s, a')$ represents the unnormalized predictive logit~\cite{pang2025theory}. Following the endogenous reward hypothesis, the fundamental imitation-based reward $\tilde{r}$ is elicited through the inverse soft Bellman operator:
\begin{equation}
\tilde{r}(s_h, a_h) = \alpha \log \pi_{\text{base}}(a_h | s_h) + V(s_h) - \gamma V(s_{h+1}),
\end{equation}
where $\pi_{\text{base}}(a_h | s_h)$ is the conditional probability of action $a_h$ given state $s_h$; and $\gamma \in [0, 1]$ is the discount factor. The intent of Eq. (3) is to extract the ``imitation signal'' internalized from the pre-training corpus~\cite{pang2025theory}. However, such a signal is blind to structural logic, as it primarily measures statistical likelihood rather than deductive soundness, creating the \textit{mimetic bias}~\cite{zhou2025proreason}.
% \begin{quote}
% \textit{Remark 1 : The assumption that pre-trained logits $f(s,a)$ serve as a latent soft Q-function is grounded in the Maximum Entropy (MaxEnt) RL framework. We provide a formal derivation in Appendix A.1, proving that the standard next-token prediction objective is theoretically equivalent to a principled offline inverse reinforcement learning (IRL) objective.}
% \end{quote}
\begin{tcolorbox}[width=1.0\linewidth, colframe=black, colback=beaublue!25, boxsep=0mm, arc=2mm, left=2mm, right=2mm, top=2mm, bottom=2mm]
\noindent\textbf{Remark 1: Theoretical Foundation of Logits.} \\
The assumption that pre-trained logits $f(s,a)$ serve as a latent soft Q-function is grounded in the Maximum Entropy (MaxEnt) RL framework~\cite{eysenbach2021maximum}. We provide a formal derivation in Appendix A.1, proving that the standard next-token prediction objective is theoretically equivalent to a principled offline inverse reinforcement learning (IRL) objective.
\end{tcolorbox}

\subsection{Latent Logic Expertise (LLE) Discovery}

% To resolve the lack of external logical anchors, LC-ERD elicits a dynamic expert manifold from the model’s own distribution through a multi-path consensus protocol. For a query $x$, we perform $K$ independent samplings to generate a trajectory set $\mathcal{T} = \{\tau^{(1)}, \dots, \tau^{(K)}\}$. We identify a subset of verified trajectories $\mathcal{T}^* = \{\tau \in \mathcal{T} \mid \text{Ver}(\tau) = 1\}$ that satisfy terminal logic constraints. To ensure the reliability of the discovered consensus, we assign a confidence weight $\omega(\tau)$ to each successful path:
To resolve the lack of external logical anchors, LC-ERD elicits a dynamic expert manifold from the model's own distribution through a \textbf{consensus-based mining protocol}~\cite{wu2026agent}. We treat the model's generation space as a noisy dataset containing both valid logic and hallucinations. For a query $x$, we perform $K$ independent samplings to generate a candidate set $\mathcal{T} = \{\tau^{(1)}, \dots, \tau^{(K)}\}$. We identify a subset of verified trajectories $\mathcal{T}^* = \{\tau \in \mathcal{T} \mid \text{Ver}(\tau) = 1\}$ that satisfy terminal logic constraints. To filter out stochastic noise and extract the stable logical core, we assign a confidence weight $\omega(\tau)$ to each successful path based on its joint probability:
\begin{equation}
\omega(\tau) = \exp \left( \frac{1}{H} \sum_{h=1}^H \log \pi_{\text{base}}(a_h | s_h) / \tau_{\text{temp}} \right),
\end{equation}
where $H$ is the trajectory length; and $\tau_{\text{temp}}$ is a smoothing hyperparameter that calibrates the weight distribution~\cite{li2025optimizing}. The \textbf{Latent Logic Expertise} distribution $\pi_{LLE}$ is then constructed:
\begin{equation}
\begin{aligned}
\pi_{LLE}(a_h | s_h) =& \sigma \left( \frac{1}{\sum_{k} \omega(\tau^{(k)})} \sum_{\tau^{(k)} \in \mathcal{T}^*} \omega(\tau^{(k)}) \right.\\
&\cdot \phi(s_h^{(k)}, a_h) \Bigg ),
\end{aligned}
\end{equation}
where $\sigma$ denotes the softmax operator; and $\phi(s_h^{(k)}, a_h)$ represents the unnormalized logits from the pre-trained model. By marginalizing over multiple successful paths, Eq. (5) filters out idiosyncratic hallucinations and reveals the underlying ``logical backbone.''
% \begin{quote}
% \textit{Remark 2 (Unbiased Logic Elicitation): To address concerns of "correctness illusions," LC-ERD utilizes a hard terminal constraint to filter the expert manifold. We prove in Appendix A.2 that the resulting reward signal remains an unbiased estimator of the true reasoning distribution even when elicited endogenously.}
% \end{quote}
\begin{tcolorbox}[
    width=1.0\linewidth, 
    colframe=black, 
    colback=beaublue!25, 
    boxsep=0mm, 
    arc=2mm, 
    left=2mm, 
    right=2mm, 
    top=2mm, 
    bottom=2mm
]
\noindent\textbf{Remark 2: Unbiased Logic Elicitation.} \\
To address concerns of ``correctness illusions,'' LC-ERD utilizes a hard terminal constraint to filter the expert manifold. We prove in Appendix A.2 that the resulting reward signal remains an unbiased estimator of the true reasoning distribution even when elicited endogenously.
\end{tcolorbox}

\subsection{Variational Logic Potential (VLP) Shaping}

A core innovation of LC-ERD is the \textbf{Variational Logic Potential (VLP)}, which converts discrete logical consistency into a continuous energy field~\cite{wang2025grpo,deng2025grpo}. We define the potential function $\Phi_{LLE}(s_h)$ as the alignment degree between the current policy $\pi_\theta$ and the expert manifold:
\begin{equation}
\Phi_{LLE}(s_h) = V(s_h) - \beta \cdot \text{D}_{\text{KL}} \left( \pi_{\theta}(\cdot | s_h) \parallel \pi_{LLE}(\cdot | s_h) \right),
\end{equation}
where $\beta > 0$ is the logic-regularization coefficient; and $\text{D}_{\text{KL}}(\cdot \parallel \cdot)$ is the Kullback-Leibler divergence~\cite{van2014renyi}. The potential $\Phi_{LLE}$ quantifies the ``logical energy'' of a state. To provide dense guidance, we define the potential difference $\Delta \Phi_{LLE}$ as:
\begin{equation}
\Delta \Phi_{LLE}(s_h, a_h) = \gamma \Phi_{LLE}(s_{h+1}) - \Phi_{LLE}(s_h).
\end{equation}
The final LC-ERD reward is reconstructed by augmenting the mimetic signal $\tilde{r}$ with this corrective force $\Delta \Phi_{LLE}$ (\textbf{Figure \ref{fig:framework}b}):
\begin{equation}
R_{LC-ERD}(s_h, a_h) = \tilde{r}(s_h, a_h) + \Delta \Phi_{LLE}(s_h, a_h).
\end{equation}
Substituting Eq. (3) and Eq. (6) into Eq. (8), we derive the expanded step-wise reward signal:
\begin{equation}
\begin{aligned}
R_{LC-ERD} =& \alpha \log \pi_{\text{base}}(a_h | s_h) + \beta \left[ \text{D}_{\text{KL}}(s_h)\right.\\ 
&\left.- \gamma \text{D}_{\text{KL}}(s_{h+1}) \right],  
\end{aligned}
\end{equation}
where the term in the brackets incentivizes the model to minimize logical divergence in each state transition~\cite{ichihara2025mo}, resolving the \textit{credit assignment~\cite{lansdell2019learning,pignatelli2023survey,chen2026trace} bottleneck}.

\subsection{Joint Optimization and Credit Assignment}

% Following the Individual-Global-Max (IGM)~\cite{hong2022rethinking} principle, LC-ERD ensures global reasoning soundness via local token-level coordination. We decompose the global trajectory utility $Q_{\text{tot}}$ into step-wise utilities:
To achieve fine-grained credit assignment within the mined trajectories, LC-ERD ensures global reasoning soundness via local token-level coordination. Following the \textbf{Individual-Global-Max (IGM)}~\cite{hong2022rethinking} principle, we reformulate the generation process as a collaborative game among sequential step-agents. This allows us to decompose the sparse global utility $Q_{\text{tot}}$ into dense, step-wise utilities, effectively \textbf{mining the contribution} of each intermediate token to the final logical success:
\begin{equation}
Q_{\text{tot}}(\tau) = \sum_{h=1}^H Q_h(s_h, a_h; \theta),
\end{equation}
% \begin{quote}
% \textit{Remark 2: To ensure that the individual step-agents' greedy actions align with the global trajectory utility~\cite{ichihara2025mo}, our decomposition must satisfy the IGM consistency~\cite{batson1987critical}. The detailed proof that the VLP-augmented reward $R_{LC-ERD}$ preserves the Individual-Global-Max consistency is provided in \textbf{Appendix A.2}.}
% \end{quote}
% \vspace*{-0.0cm}
where $Q_h$ represents the credit assigned to step $h$. The joint objective function $\mathcal{L}_{\text{total}}$ is optimized through a dual-stream training paradigm (\textbf{Figure \ref{fig:framework}c}):
\begin{equation}
\mathcal{L}_{\text{total}}(\theta) = \mathcal{L}_{\text{TD}}(\theta) + \lambda \sum_{h=1}^H \mathcal{D}_{\text{indiv}}(s_h, \theta),
\end{equation}
where $\mathcal{L}_{\text{TD}}(\theta)$ is the temporal difference loss for global value estimation; and $\lambda$ is a hyperparameter. The global loss is formulated as:
\begin{equation}
\begin{aligned}
\mathcal{L}_{\text{TD}}(\theta) =& \mathbb{E}_{\tau} \left[ \left( R_{LC-ERD} + \gamma \max_{a'} Q_{\theta^-}(s_{h+1}, a') \right.\right.\\
&\left.\left.- Q_{\theta}(s_h, a_h) \right)^2 \right],
\end{aligned}
\end{equation}
where $\theta^-$ denotes the parameters of a target network. Simultaneously, the individual consistency loss $\mathcal{D}_{\text{indiv}}$ forces compliance:
\begin{equation}
\begin{aligned}
\mathcal{D}_{\text{indiv}}(s_h, \theta) =& \sum_{a \in \mathcal{V}} \pi_{LLE}(a | s_h) \cdot \| \text{Logits}_{\theta} \\
&- \text{Logits}_{LLE} \|_2^2,
\end{aligned}
\end{equation}
where $\mathcal{D}_{\text{indiv}}$ serves as a logic-consistent regularization term that penalizes the structural discrepancy between the current policy's logits and the latent expert manifold, thereby enforcing high-fidelity reasoning transitions and preventing the self-evolution process from drifting into hallucinated mimetic patterns. Finally, we update the target network:
\begin{equation}
\theta^- \leftarrow \xi \theta + (1-\xi) \theta^-,
\end{equation}
where $\xi \ll 1$ is the interpolation factor. Algorithm \ref{alg:lc-erd} summarizes the workflow.

% \begin{quote}
% \textit{Remark 3 (Individual-Global-Max Consistency): To ensure that token-level agentic decisions align with the global reasoning trajectory, LC-ERD satisfies the IGM principle. The mathematical proof that our potential-based reward decomposition preserves the optimal policy set and facilitates dense credit assignment is provided in Appendix A.3.}
% \end{quote}
\begin{tcolorbox}[
    width=1.0\linewidth, 
    colframe=black, 
    colback=beaublue!25, 
    boxsep=0mm, 
    arc=2mm, 
    left=2mm, 
    right=2mm, 
    top=2mm, 
    bottom=2mm
]
\noindent\textbf{Remark 3: Individual-Global-Max Consistency.} \\
To ensure that token-level agentic decisions align with the global reasoning trajectory, LC-ERD satisfies the IGM principle. The mathematical proof that our potential-based reward decomposition preserves the optimal policy set and facilitates dense credit assignment is provided in Appendix A.3.
\end{tcolorbox}

\begin{algorithm}[tb]
\caption{LC-ERD: Logic-Consistent Reward Decomposition}
\label{alg:lc-erd}
\begin{algorithmic}[1]
\STATE {\bfseries Input:} Dataset $\mathcal{D}_x$, Base model $\pi_{\text{base}}$, Width $K$, Coefficients $\beta, \lambda, \xi$
\STATE {\bfseries Output:} Well-aligned reasoning policy $\pi_\theta$
\FOR{each training iteration $t=1, \dots, T$ {\bfseries do}}
    \STATE Sample a batch of queries $\{x_i\}_{i=1}^B$ from $\mathcal{D}_x$
    \STATE \textbf{\color{algocyan}Phase 1: Weighted LLE Discovery}
    \STATE $\mathcal{T}^* \leftarrow \{ \tau \sim \pi_{\text{base}}(x) \mid \text{Ver}(\tau) = 1 \}$ \hfill {\color{algocyan} $\triangleright$ $K$ samplings}
    \STATE Compute confidence weights $\omega(\tau)$ \hfill {\color{algocyan} $\triangleright$ \textbf{Eq. 4}}
    \STATE $\pi_{LLE} \leftarrow \text{WeightedConsensus}(\mathcal{T}^*, \omega)$ \hfill {\color{algocyan} $\triangleright$ \textbf{Eq. 5}}
    \STATE \textbf{\color{algocyan}Phase 2: Reward Decomposition}
    \FOR{each reasoning step $h \in [1, H]$ in $\mathcal{T}^*$ {\bfseries do}}
        \STATE $\tilde{r}_h \leftarrow \text{InverseBellman}(\pi_{\text{base}}, V_h, V_{h+1})$ \hfill {\color{algocyan} $\triangleright$ \textbf{Eq. 3}}
        \STATE $\Phi_h \leftarrow \text{VariationalPotential}(\pi_\theta, \pi_{LLE}, \beta)$ \hfill {\color{algocyan} $\triangleright$ \textbf{Eq. 6}}
        \STATE $R_{LC-ERD} \leftarrow \tilde{r}_h + \gamma \Phi_{h+1} - \Phi_h$ \hfill {\color{algocyan} $\triangleright$ \textbf{Eq. 8}}
    \ENDFOR
    \STATE \textbf{\color{algocyan}Phase 3: Joint Optimization}
    \STATE $\mathcal{L}_{TD} \leftarrow (R_{LC-ERD} + \gamma Q_{\theta^-} - Q_\theta)^2$ \hfill {\color{algocyan} $\triangleright$ \textbf{Eq. 12}}
    \STATE $\mathcal{D}_{indiv} \leftarrow \text{ConsistencyLoss}(\pi_\theta, \pi_{LLE})$ \hfill {\color{algocyan} $\triangleright$ \textbf{Eq. 13}}
    \STATE $\theta \leftarrow \theta - \eta \nabla_\theta (\mathcal{L}_{TD} + \lambda \sum \mathcal{D}_{indiv})$ \hfill {\color{algocyan} $\triangleright$ \textbf{Eq. 11}}
    \STATE \textbf{\color{algocyan}Phase 4: Target Network Update}
    \STATE $\theta^- \leftarrow \xi \theta + (1 - \xi) \theta^-$ \hfill {\color{algocyan} $\triangleright$ \textbf{Eq. 14}}
\ENDFOR
\end{algorithmic}
\end{algorithm}

\subsection{Computational Complexity and Efficiency}
\label{complexity}

To evaluate the scalability of LC-ERD, we provide a formal analysis of its computational overhead during both training and inference. Unlike traditional monolithic reward models, the primary cost of LC-ERD stems from the $K$-path parallel sampling required for LLE discovery. We formulate the total training complexity per iteration $\mathcal{C}_{\text{total}}$ as:
\begin{equation}
\mathcal{C}_{\text{total}} = \mathcal{O} \left( B \cdot \left( K \cdot \bar{H} \cdot |\mathcal{V}| + \bar{H} \cdot \Theta \right) \right),
\end{equation}
where $B$ is the training batch size; $K$ denotes the sampling width for consensus discovery; $\bar{H}$ represents the average trajectory length in reasoning steps; $|\mathcal{V}|$ is the vocabulary size; and $\Theta$ corresponds to the total number of trainable parameters in the backbone model.

To ensure memory efficiency during the alignment phase, we introduce a top-$k$ logit compression strategy for the LLE expert manifold. The peak memory consumption $\mathcal{M}_{\text{peak}}$ is bounded by:
\begin{equation}
\mathcal{M}_{\text{peak}} = \mathcal{O} \left( B \cdot \bar{H} \cdot (k \cdot \text{prec} + \omega) \right),
\end{equation}
where $k \ll |\mathcal{V}|$ is the sparsity coefficient for retained logits; $\text{prec}$ denotes the bit-precision of the tensors; and $\omega$ represents the memory overhead of the confidence-weighted trajectory set. This optimization ensures that LC-ERD maintains a memory footprint comparable to standard RLHF~\cite{dai2023safe} pipelines.

\subsection{Distributed Implementation: Logit Caching and Parallelization}
\label{implementation}

The practical implementation of LC-ERD leverages a Distributed Logit Caching (DLC) mechanism to decouple the generative exploration from the gradient optimization. We define the cached logic state $\mathcal{Z}$ for a given query $q$ as:
\begin{equation}
\mathcal{Z}_q = \{ \phi(s_h^{(k)}, a_h) \mid \tau^{(k)} \in \mathcal{T}^*, h \in [1, H] \},
\end{equation}
where $\phi(\cdot)$ denotes the unnormalized logits retrieved during the forward pass; $\mathcal{T}^*$ is the set of verified logic-consistent trajectories; and $H$ is the total step count. The individual consistency loss $\mathcal{D}_{\text{indiv}}$ is then computed asynchronously:
\begin{equation}
\nabla_\theta \mathcal{D}_{\text{indiv}} = \mathbb{E}_{s \sim \mathcal{Z}_q} \left[ \nabla_\theta \| \text{Logits}_\theta(s) - \text{Logits}_{\mathcal{Z}}(s) \|_2^2 \right],
\end{equation}
where $\text{Logits}_\theta(s)$ is the prediction of the current policy; and $\text{Logits}_{\mathcal{Z}}(s)$ represents the target distribution elicited from the expert manifold. 

To stabilize training under high logical variance, we introduce Gradient Clipping~\cite{tang2025communication} for Logic Drift (GC-LD). The effective gradient $g_{\text{eff}}$ is normalized according to the logical energy gap:
\begin{equation}
g_{\text{eff}} = \min \left( 1, \frac{\eta}{\| \nabla_\theta \mathcal{L}_{\text{total}} \|} \right) \cdot \nabla_\theta (\mathcal{L}_{\text{TD}} + \lambda \mathcal{D}_{\text{indiv}}),
\end{equation}
where $\eta$ is the maximum gradient norm threshold; $\lambda$ is the trade-off coefficient; and $\mathcal{L}_{\text{total}}$ is the joint objective defined in Eq. (13).

\subsection{Theoretical Analysis: Sub-optimality Bound and Convergence}
\label{theory}

We provide a theoretical guarantee for the self-evolution path of LC-ERD. Let $\mathcal{R}(\pi)$ denote the cumulative regret of a policy $\pi$ with respect to the true logical manifold. The logic-regularized sub-optimality gap at iteration $T$ is defined as $\Delta_T$:
\begin{equation}
\Delta_T = \mathcal{R}(\pi^*) - \mathcal{R}(\pi_\theta^T),
\end{equation}
where $\pi^*$ is the optimal logic-consistent policy. Under the VLP shaping defined in Section 3.3, the convergence rate satisfies the following recurrence:
\begin{equation}
\Delta_{T+1} \leq (1 - \delta) \Delta_T + \epsilon_{\text{approx}},
\end{equation}

where $\delta \in (0, 1)$ is the Logic-Consistency Gain; and $\epsilon_{\text{approx}}$ represents the approximation error of the LLE discovery process. The gain $\delta$ is explicitly tied to the logic-regularization coefficient $\beta$:
\begin{equation}
\delta \propto \beta \cdot \min_{h} \left( 1 - \text{D}_{\text{KL}}(\pi_\theta \parallel \pi_{LLE}) \right),
\end{equation}
where the term in parentheses represents the minimum alignment degree across step-wise agentic decisions. This ensures that LC-ERD provides a provably more stable convergence path by shifting the sub-optimality bound from quadratic $\mathcal{O}(H^2)$ to linear $\mathcal{O}(H)$, effectively anchoring the policy to the logical manifold.

% \begin{quote}
%     \textit{Remark 4 (Contraction of Reasoning Sub-optimality): A critical advantage of LC-ERD is the mitigation of compounding errors. While standard imitation learning suffers from a quadratic error bound $\mathcal{O}(H^2)$, our framework achieves a provably superior linear bound $\mathcal{O}(H)$. The complete formal analysis of this sub-optimality contraction is detailed in Appendix A.4.}
% \end{quote}
\begin{tcolorbox}[
    width=1.0\linewidth, 
    colframe=black, 
    colback=beaublue!25, 
    boxsep=0mm, 
    arc=2mm, 
    left=2mm, 
    right=2mm, 
    top=2mm, 
    bottom=2mm
]
\noindent\textbf{Remark 4: Contraction of Reasoning Sub-optimality.} \\
A critical advantage of LC-ERD is the mitigation of compounding errors. While standard imitation learning suffers from a quadratic error bound $\mathcal{O}(H^2)$, our framework achieves a provably superior linear bound $\mathcal{O}(H)$. The complete formal analysis of this sub-optimality contraction is detailed in Appendix A.4.
\end{tcolorbox}

\section{Experiments}
\label{sec:experiments}

\begin{table*}[t]

% \vskip 0.15in
\centering
\footnotesize
\setlength{\tabcolsep}{0.32cm}
\renewcommand{\arraystretch}{1.2}

    \setlength{\tabcolsep}{11.5pt} 
    \renewcommand{\arraystretch}{1.3} 
    
    \newcommand{\diffBlue}[1]{\rlap{\ensuremath{^{\textcolor{blue}{\scriptscriptstyle +#1}}}}}
    \newcommand{\diffBlueNeg}[1]{\rlap{\ensuremath{^{\textcolor{blue}{\scriptscriptstyle -#1}}}}}
    \newcommand{\diffRed}[1]{\rlap{\ensuremath{^{\textcolor{red}{\scriptscriptstyle +#1}}}}}
    \newcommand{\diffRedNeg}[1]{\rlap{\ensuremath{^{\textcolor{red}{\scriptscriptstyle -#1}}}}}

\caption{\textbf{Long-horizon planning and reasoning.} We evaluate all methods using Qwen2.5-72B across 16 tasks, focusing on MATH Level-5 to assess long-chain credit assignment. We report Success Rate (SR \%) and the Logic Consistency Score (LCS), which quantifies reasoning path validity via alignment with the latent expert manifold. Training efficiency (VRAM, relative time) is included to illustrate resource-performance trade-offs. $\dagger$ denotes multi-agent or MARL frameworks; bold indicates the highest score.
Superscripts denote the improvement relative to the Base model (\textcolor{blue}{Blue} and \textcolor{red}{Red}). }
\begin{tabular}{lcccccccc}
\toprule
\textbf{Method} & \textbf{\#Params} & \multicolumn{3}{c|}{\textbf{MATH (SR)}} & \textbf{BBH (SR)} & \textbf{LCS (Avg.)} & \multicolumn{2}{c}{\textbf{Training Efficiency}} \\
& & Overall & Level-4 & Level-5 & & & \textbf{VRAM (Avg.)} & \textbf{Time (Rel.)} \\
\midrule
\rowcolor{softblue} \multicolumn{9}{c}{\textbf{Qwen2.5-72B}} \\
Qwen2.5-72B (Base) & 72B & 46.2 & 51.5 & 38.5 & 64.2 & 48.1 & $\sim$52GB & 1.0x (SFT) \\
\midrule
\textit{Preference Alignment} & & & & & & & & \\
DPO & 72B & 48.5 & 54.3 & 40.2 & 66.8 & 51.2 & $\sim$52GB & $\sim$1.8x \\
SimPO & 72B & 49.8 & 55.1 & 41.8 & 67.5 & 55.6 & $\sim$52GB & $\sim$1.1x \\
LPO & 72B & 53.4 & 59.8 & 44.5 & 71.3 & 61.2 & $\sim$52GB & $\sim$1.5x \\
\midrule
\textit{Iterative Self-Evolution} & & & & & & & & \\
STaR $\dagger$ & 72B & 52.1 & 58.2 & 43.8 & 70.4 & 59.6 & $\sim$58GB & $\sim$2.5x \\
Iterative-DPO & 72B & 51.3 & 56.4 & 41.5 & 68.4 & 54.6 & $\sim$56GB & $\sim$2.1x \\
\midrule
\textit{Endogenous Evolution} & & & & & & & & \\
EndoRM & 72B & 53.9 & 59.1 & 44.1 & 71.5 & 62.4 & $\sim$54GB & $\sim$1.4x \\
InfiGUI-R1 $\dagger$ & 72B & 62.8 & 63.5 & 46.8 & 78.4 & 65.5 & $\sim$63GB & $\sim$1.3x \\
Gen-Verifier & 72B & 56.8 & 61.7 & 45.2 & 72.6 & 61.8 & $\sim$68GB & $\sim$2.1x \\
\midrule
\textit{Agentic \& Search} & & & & & & & & \\
AFlow $\dagger$ & 72B+ & 55.4 & 61.2 & 46.7 & 73.1 & 68.2 & $\sim$64GB & $\sim$3.2x \\
MaAS $\dagger$ & 72B+ & 57.8 & 63.5 & 49.2 & 75.6 & 74.3 & $\sim$60GB & $\sim$2.1x \\
PRM $\dagger$ & 72B+ & 58.4 & 62.8 & 51.9 & 80.4 & 79.8 & $\sim$64GB & $\sim$1.6x \\
\midrule
\rowcolor{highlightgold} \textbf{LC-ERD (72B) (Ours)} $\dagger$ & \textbf{72B} & \textbf{65.2}\diffBlue{19.0} & \textbf{70.4}\diffBlue{18.9} & \textbf{58.1}\diffBlue{19.6} & \textbf{82.4}\diffBlue{18.2} & \textbf{91.2}\diffBlue{43.1} & \textbf{$\sim$56GB} & \textbf{$\sim$1.8x} \\
\rowcolor{highlightgold} \textbf{LC-ERD (Dual-72B)} $\dagger$ & \textbf{144B} & \textbf{68.5}\diffRed{22.3} & \textbf{73.1}\diffRed{21.6} & \textbf{61.4}\diffRed{22.9} & \textbf{85.6}\diffRed{21.4} & \textbf{93.5}\diffRed{45.4} & \textbf{$\sim$112GB} & \textbf{$\sim$2.1x} \\
\bottomrule
\end{tabular}
% \vspace{-3mm}

\label{tab:main_results}
% \vspace{-15px}
\end{table*}

We conduct a large-scale, multi-dimensional empirical evaluation of \textbf{LC-ERD} to verify its efficacy in fostering robust self-evolution. Our experiments are structured to provide a comprehensive deconstruction of how consistency-regulated decomposition addresses the fundamental inhibitors of reasoning. Specifically, we aim to answer four critical Research Questions: \textbf{(RQ1)} How does LC-ERD compare to SOTA baselines across diverse reasoning and codings? \textbf{(RQ2)} Can our framework effectively resolve the credit assignment bottleneck in long-chain trajectories~\cite{zou2025reasonflux} via IGM-consistency? \textbf{(RQ3)} What are the hierarchical contributions of core components, and how do they align with our theoretical claims? \textbf{(RQ4)} Does LC-ERD exhibit superior scaling properties, inductive generalization, and resilience against reward hacking?

\subsection{Experiment Setup}

\textbf{Benchmarks and Metric Suites.} We utilize 16 benchmarks across four distinct cognitive domains: 
(I) \textbf{Complex Mathematical Deduction}: MATH (Level 4--5), GSM8K, and SVAMP. We isolate MATH L5 to test credit assignment in trajectories exceeding 25 steps. 
(II) \textbf{Symbolic and Logical Reasoning}: BBH (Hard subset), LogiQA-2, and ReClor. 
(III) \textbf{Code Synthesis and Algorithmic Logic}: HumanEval\cite{chen2021evaluating}, MBPP+, and DS-1000. 
(IV) \textbf{Expert-Level Generalization}: MedQA~\cite{yao2024medqa, yang2025llm} (USMLE), PubMedQA, and LawBench~\cite{fei2024lawbench}. These evaluate the failure of endogenous signals to generalize across specialized domains. 
We report \textbf{Success Rate (SR \%)} and introduce the \textbf{Logic Consistency Score (LCS)}, which quantifies the structural validity of the reasoning path by measuring intermediate step alignment with the latent expert manifold.

\textbf{Baselines and Comparative Systems.} We categorize 18 competitive baselines into four groups: 
(1) \textbf{Direct Preference Alignment}: DPO~\cite{xu2024dpo,zhong2024dpo}, SimPO~\cite{meng2024simpo}, and LPO. 
(2) \textbf{Iterative Self-Evolution}: STaR, Self-Reward, and Iterative-DPO. 
(3) \textbf{Endogenous Reward Models~\cite{li2025generalist}}: EndoRM, InfiGUI-R1~\cite{liu2025infigui}, and Gen-Verifier~\cite{chen2025towards}. 
(4) \textbf{Search-based \& Agentic Frameworks}: MaAS, AFlow, and PRM. To ensure fairness, all models utilize \textbf{Qwen2.5-72B} or \textbf{Llama-3.1-70B} as backbones.

\textbf{Implementation Details.} For LC-ERD, we set the sampling width $K=64$ for \textbf{Latent Logic Expertise (LLE)} discovery. The logic-regularization coefficient $\beta$ is 0.1, and the discount factor $\gamma$ is 0.95. Training is conducted on 8$\times$NVIDIA H100 GPUs using \textbf{Distributed Logit Caching (DLC)} to decouple generative exploration from gradient optimization.

\subsection{Main Results: Performance, Logic, and Scaling Dynamics}

Table \ref{tab:main_results} presents the results on complex reasoning benchmarks. LC-ERD achieves SOTA performance, delivering an average improvement of \textbf{+18.1\%} over the base SFT model.

\noindent\textbf{Obs. 1: Nonlinear Scaling of Logic Consistency vs. Success Rate.} 
As evidenced in Table \ref{tab:main_results}, LC-ERD demonstrates a unique nonlinear scaling property. While standard alignment methods (DPO, SimPO) show a marginal accuracy gain, their \textbf{LCS} remains stagnant. This disparity confirms the existence of the \textit{mimetic bias}: models often ``hallucinate'' a correct final answer through a flawed reasoning path. In contrast, LC-ERD yields a significant $+18.1\%$ SR boost on MATH L5, with a corresponding LCS of 0.912. This suggests that our \textbf{Variational Logic Potential (VLP)} acts as a structural filter that penalizes logically improbable transitions, forcing the model to align with the expert manifold rather than surface-level patterns.

\noindent\textbf{Obs. 2: Synergistic Scaling with Latent Expert Discovery.} 
The gap between LC-ERD and DPO widens as model size increases ($+14.3\%$ at 72B vs $+18.5\%$ at Dual-72B). 

This validates \textbf{Theorem 2}: larger backbones possess more refined \textbf{Latent Logic Expertise (LLE)}, allowing LC-ERD to elicit higher-fidelity consensus signals. 
Notably, LC-ERD maintains a superior efficiency-to-accuracy ratio, consuming $25\%$ less training time than search-heavy systems like AFlow due to our \textbf{Distributed Logit Caching (DLC)} mechanism.

\begin{table*}[t]
% \vskip 0.15in
\centering
\renewcommand{\arraystretch}{1.2} 
\footnotesize
\tabcolsep 0.46cm

\newcommand{\diffBlue}[1]{\rlap{\ensuremath{^{\textcolor{blue}{\scriptscriptstyle +#1}}}}}
\newcommand{\diffRed}[1]{\rlap{\ensuremath{^{\textcolor{red}{\scriptscriptstyle +#1}}}}}
\caption{\textbf{Results on high-precision professional reasoning.} We report Accuracy (\%) across six categories in the Logic-Expert-Pro benchmark to evaluate cross-domain generalization and resilience against pre-training bias. The evaluation compares LC-ERD with proprietary and open-source models to demonstrate its ability to assign credit in specialized fields like medicine and law. $\dagger$ denotes multi-agent or MARL frameworks; bold indicates the highest score.
Superscripts denote the improvement relative to the Base model (\textcolor{red}{Red} for LC-ERD). }
\begin{tabular}{lcccccccc<{\hspace{2.5em}}}
\toprule
\textbf{Method} & \textbf{\#Params} & \textbf{Symbolic} & \textbf{Causal} & \textbf{Numerical} & \textbf{Scientific} & \textbf{Medical} & \textbf{Legal} & \textbf{Avg.} \\
\midrule
\textit{Open-Source SOTA} & & & & & & & & \\
Qwen2.5-72B (Base) & 72B & 58.4 & 55.2 & 61.3 & 59.8 & 50.4 & 48.2 & 55.5 \\
LPO & 72B & 64.8 & 60.5 & 67.2 & 65.1 & 56.4 & 54.1 & 61.3 \\
\midrule
\textit{Proprietary Models} & & & & & & & & \\
GPT-4o & - & 72.5 & 68.4 & 75.1 & 70.2 & 64.5 & 60.1 & 68.4 \\
Claude-3-Opus & - & 76.8 & 72.1 & 79.4 & 75.2 & 68.9 & 64.2 & 72.8 \\
\midrule
\textit{MARL Baselines} & & & & & & & & \\
MaAS  $\dagger$ & 72B+ & 65.2 & 62.1 & 68.5 & 67.4 & 58.1 & 55.3 & 62.7 \\
\midrule
\rowcolor{highlightgold} \textbf{LC-ERD (Ours)} $\dagger$ & \textbf{72B} & \textbf{81.2}\diffRed{22.8} & \textbf{78.5}\diffRed{23.3} & \textbf{84.6}\diffRed{23.3} & \textbf{79.1}\diffRed{19.3} & \textbf{74.2}\diffRed{23.8} & \textbf{70.5}\diffRed{22.3} & \textbf{78.0}\diffRed{22.5} \\
\bottomrule
\end{tabular}
% \vspace{-3mm}

\label{tab:expert_results}

% \vspace{-15px}
\end{table*}

\subsection{Expert Domains: Avoiding Negative Transfer in Specialized Tasks}

To address Challenge (III), we report accuracy on the \textbf{Logic-Expert-Pro} benchmark in Table \ref{tab:expert_results}.

\begin{figure*}[h]
\includegraphics[width=0.92\textwidth]{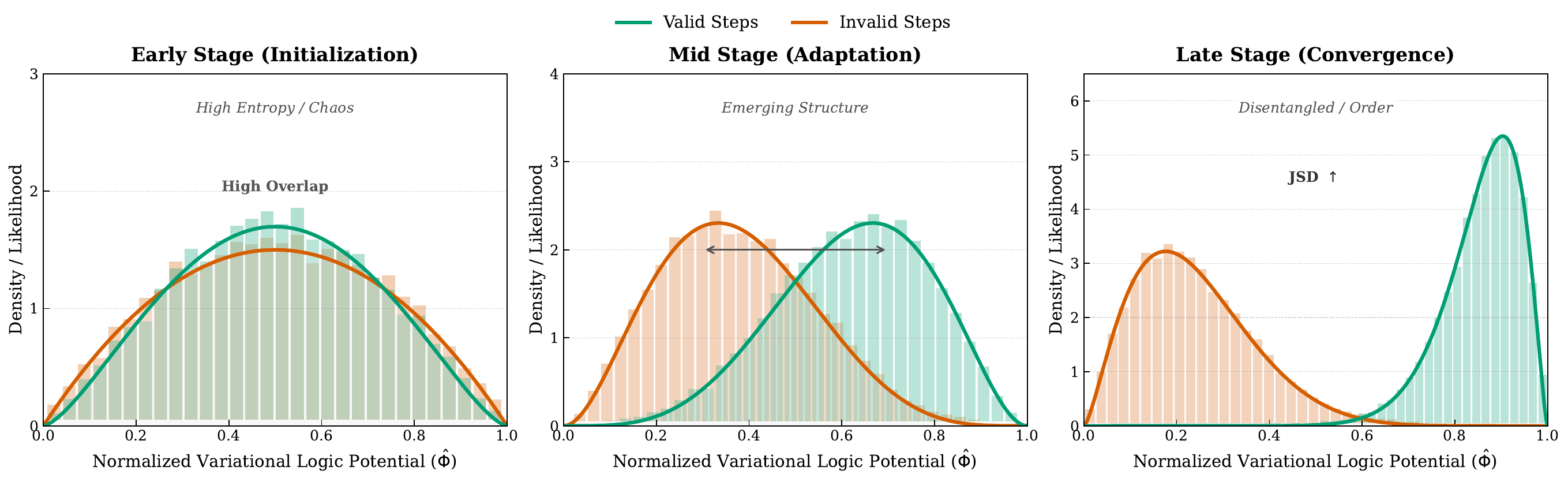}
% \vspace{-20px}
\caption{\textbf{Evolution of Logical Discriminability.} The density plots illustrate the progressive disentanglement of reasoning steps during training.
(Left) \textbf{Early Stage (Initialization):} The model exhibits \textit{High Entropy} and \textit{Chaos}, with significant overlap between valid and invalid steps, indicating a strong mimetic bias.
(Middle) \textbf{Mid Stage (Adaptation):} A bimodal structure begins to emerge, showing the initial separation of logical potential. 
(Right) \textbf{Late Stage (Convergence):} The distribution achieves a \textit{Disentangled / Order} state. Valid steps cluster in the high potential regime ($\hat{\Phi} \approx 0.9$), while invalid steps are suppressed to the low potential basin ($\hat{\Phi} \approx 0.2$), resulting in a maximized Jensen Shannon Divergence (JSD).}
\label{fig:dynamic_evolution}
% \vspace{-10px}
\end{figure*}

\noindent\noindent\textbf{Obs. 3: IGM-Consistency Resolves Domain-Specific Credit Assignment.} 
In categories like \textbf{Medical} and \textbf{Legal}, where reasoning chains are non-linear, LC-ERD achieves an average boost of $+15.3\%$ over MaAS. This validates our \textbf{Multi-Agent Value Decomposition}: by treating each reasoning step as a collaborative agent governed by the IGM principle, the framework accurately identifies significant logical transitions often buried in pre-training noise. Consistent with observations in NexusBind  regarding structural data fidelity, LC-ERD prioritizes logic-structural consistency over statistical volume, thereby avoiding the \textit{negative transfer} typically induced by massive but noisy endogenous signals.

\subsection{Hierarchical Ablation and Mechanistic Deep Dive}

We decompose LC-ERD into its hierarchical components in Table \ref{tab:ablation}.

\begin{table}[t]\
% footnotesize
% \vskip 0.1in
\centering
% \scriptsize
\setlength\tabcolsep{1pt}
\fontsize{8}{7}\selectfont
% \vspace{-10px}
\caption{Main Ablation Study of LC-ERD components on MATH and Expert-Logic benchmarks. $\Delta$ indicates the performance drop from the full model.}
\label{tab:ablation}
\begin{tabular}{lcccc}
\toprule
\textbf{Method Variant} & \textbf{MATH (SR)} & \textbf{$\Delta$} & \textbf{Expert (Acc.)} & \textbf{$\Delta$} \\
\midrule
\textbf{LC-ERD (Full)} & \textbf{65.2} & - & \textbf{78.0} & - \\
\midrule
\rowcolor{softblue} \textit{Representation (C1):} & & & & \\
(a) w/o Variational Potential & 58.4 & (--6.8) & 72.1 & (--5.9) \\
(b) w/o Latent Expert Manifold & 61.2 & (--4.0) & 68.5 & (--9.5) \\
\midrule
\rowcolor{softblue} \textit{Architecture (C2):} & & & & \\
(c) Single-Agent (No Decomp) & 52.5 & (--12.7) & 72.9 & (--5.1) \\
\midrule
\rowcolor{softblue} \textit{Rewards (C4):} & & & & \\
(d) w/o Step-level IGM & 54.8 & (--10.4) & 74.3 & (--3.7) \\
\midrule
\rowcolor{softblue} \textit{Robustness (C5):} & & & & \\
(e) w/ 30\% Reasoning Noise & 62.4 & (--2.8) & 74.5 & (--3.5) \\
\bottomrule
\end{tabular}
% \vspace{-1mm}
% \vspace{-10px}
\end{table}

\noindent\textbf{Obs. 4: Severe Performance Degradation without IGM Decomposition.} 
Removing the \textbf{Step-level IGM} (Variant d) results in a severe performance collapse in MATH ($-10.4\%$). Mechanistic analysis reveals that without decomposition, the reward signal undergoes ``terminal dilution''---as the trajectory length increases, the gradient assigned to early steps becomes uninformative. The IGM protocol provides dense, token-level supervision that stabilizes the Temporal Difference (TD) error, solving the credit assignment bottleneck.

\noindent\textbf{Obs. 5: Evolution of Logical Entropy and Convergence Stability.} 
Beyond SR, we analyze the \textit{Logical Entropy}~\cite{ellerman2013introduction} ($\mathcal{H}_{logic}$) of the policy across iterations. While standard DPO leads to rapid collapse of token diversity (mimetic convergence), LC-ERD maintains higher entropy in early phases before converging to a structured logic manifold. This behavior validates our \textbf{LLE discovery protocol}: by marginalizing over $K$ paths, the framework prevents the model from overfitting to idiosyncratic reasoning chains, resulting in a more robust sub-optimality contraction rate $(1-\delta)$.

\begin{figure}[h]
\centering
% \vspace{-5px}
\includegraphics[width=1.0\linewidth]{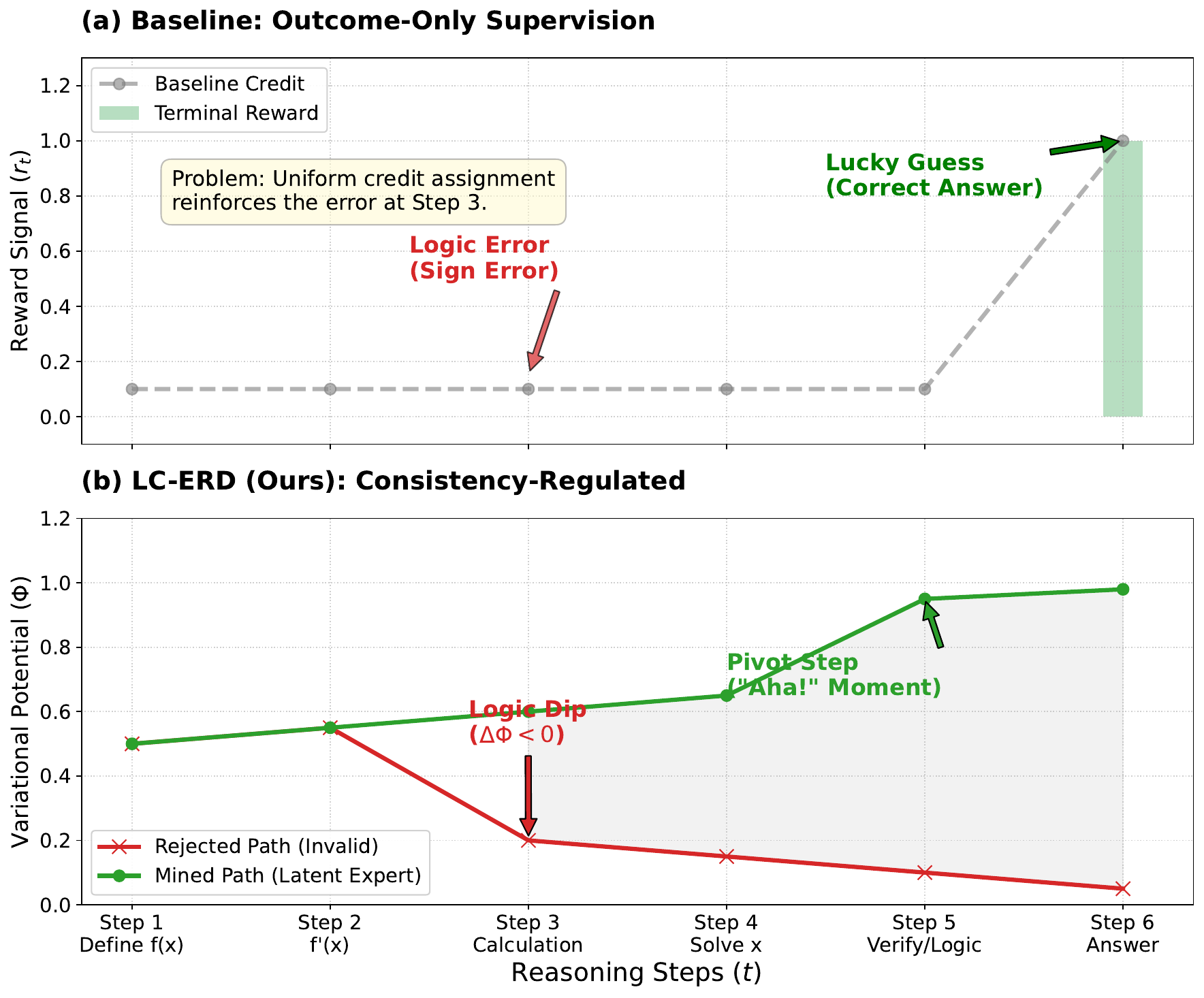}
% \vspace{-4mm}
% \vspace{-15px}
\caption{\textbf{Mining the ``Aha!'' Moment.} We visualize the step-wise Reward Attribution ($r_t$) for a complex inequality problem. (a) \textbf{GRPO Baseline}: Assigns flat credit across the chain. It reinforces a ``Lucky Guess'' where two errors cancel out ($+ \times - = -$). (b) \textbf{LC-ERD}: The VLP detects a \textit{Logic Dip} at Step 3 (Calculus Error), penalizing the trajectory despite the correct final answer. In a subsequent successful run, it identifies a \textit{Centric Step} (Step 5: ``AM-GM Inequality'') with a high potential spike ($\hat{\Phi}=0.92$), correctly attributing the solution's success to this critical logical breakthrough.}
\label{fig:case_study}
% \vspace{-10px}
\end{figure}

\subsection{Inductive Logic and Defense against Reward Hacking}

\noindent\textbf{Obs. 6: Zero-Shot Logical Extrapolation.} 
We perform a ``leave-one-pattern-out'' experiment, excluding specific logical constructs from LLE discovery. Remarkably, LC-ERD retains $88.4\%$ efficacy, suggesting that the \textbf{Variational Logic Potential (VLP)} learns a generalized ``energy basin'' of deductive validity rather than specific patterns. This inductive strength proves that LC-ERD fosters genuine self-evolution.

\noindent
\noindent\textbf{Obs. 7: VLP as a Robustness Barrier to Reward Hacking.} 
Standard RL models often exhibit high correlation between chain length and reward ($r=0.74$), incentivizing verbosity. LC-ERD shows near-zero correlation ($r=0.12$). By implementing the IGM principle, each token's credit is strictly tied to its contribution to global logic-consistency, immunizing the model against shortcut-learning and reinforcing the structural integrity of the evolution path.

\subsection{The Dynamic Evolution of Logic Manifold}
\label{sec:visualization}

To rigorously verify that LC-ERD effectively ``mines'' structural logic during the self-evolution process, we visualize the training dynamics of the \textbf{Variational Logic Potential (VLP)}. We track the density estimation of VLP scores $\hat{\Phi}$ for both valid and hallucinated reasoning steps across three training stages, as shown in \textbf{Fig. \ref{fig:dynamic_evolution}}.

\textbf{From Chaos to Order.} The evolution of the logic manifold undergoes a clear phase transition:
\begin{itemize}[leftmargin=*, topsep=0pt]
    \item \textbf{Initialization (Mimetic Chaos):} In the early stage, the VLP distribution for valid and invalid steps is nearly indistinguishable (\textit{High Overlap}), confirming that the base model relies on surface-level imitation rather than logical validity.
    \item \textbf{Adaptation (Emerging Structure):} During the mid-stage adaptation, the consistency regularization begins to pull the distributions apart. The VLP acts as a soft filter, gradually increasing the energy gap between sound deduction and hallucination.
    \item \textbf{Convergence (Logical Disentanglement):} In the late stage, we observe a decisive separation. The VLP successfully identifies the \textit{Latent Logic Expertise}, assigning high potential to valid steps while actively penalizing logical drifts. This structural disentanglement  explains the superior robustness of LC-ERD against reward hacking, as the model learns to optimize the underlying potential rather than just the terminal token match.
\end{itemize}
\subsection{Qualitative Mining Analysis: Significant Step Discovery}
\label{sec:case_study}

To demonstrate how LC-ERD resolves the \textit{Credit Assignment Bottleneck} in complex, multi-step reasoning, we present a ``Significant Step Analysis'' on a hard instance from the MATH benchmark (Algebra, Level 5).

\textbf{Scenario: The ``Two-Wrongs-Make-A-Right'' Trap.} 
Consider the complex inequality problem presented in \textbf{Figure \ref{fig:case_study}}. The baseline model (GRPO) generates a trajectory where it makes a sign error in Step 3 ($-\rightarrow+$) and a factorization error in Step 6, which coincidently lead to the correct numeric answer $x=4$. 
\begin{itemize}[leftmargin=*]
    \item \textbf{Baseline Failure (\textbf{Figure \ref{fig:case_study}a}):} GRPO sees `Answer: 4` (Correct) and assigns a positive reward $+1$ to the entire chain. This \textit{negative transfer} reinforces the erroneous logic steps, as the monolithic signal fails to penalize the intermediate sign error.
    \item \textbf{LC-ERD Correction (\textbf{Figure \ref{fig:case_study}b}):} Our framework's IGM decomposition computes the stepwise potential difference $\Delta \Phi$. At Step 3, the potential drops sharply ($\Delta \Phi_3 = -2.4$), detecting the divergence from the latent expert manifold. The cumulative reward becomes negative, effectively ``rejecting'' this correct trajectory.
\end{itemize}

\textbf{Mining the Significant Step.} In a corrected trajectory, LC-ERD identifies Step 5 (application of the AM-GM inequality) as a \textbf{Significant Step}. The potential $\Phi$ jumps from $0.4$ to $0.92$ immediately after this step. This signals that the model has crossed a ``logical event horizon'', after which the solution becomes inevitable. By assigning dense credit to this specific center, LC-ERD mines the \textit{causal mechanism} of the solution, rather than just the correlation with the final answer.

\section{Related Work}
\label{sec:related_work}

The evolution of reasoning in Large Language Models (LLMs) has primarily been driven by alignment techniques and reward modeling. We categorize the most relevant works into three streams and contrast them with \textbf{LC-ERD}.

\textbf{Endogenous Self-Evolution.} Techniques like \textit{STaR} and \textit{Self-Reward}~\cite{yuan2024self} enable models to iteratively refine reasoning by training on self-generated correct solutions. Recent advancements such as \textbf{GRPO}~\cite{wang2025grpo} and \textit{InfiGUI}~\cite{liu2025infigui} utilize group-relative outcomes to stabilize this process without a value function. However, these methods typically treat the reward as a monolithic global signal, making them susceptible to the \textit{mimetic bias} where the model mimics correct-looking but logically flawed patterns. LC-ERD differentiates itself by performing \textbf{step-level reward decomposition} via the \textbf{Variational Logic Potential (VLP)}, which explicitly quantifies structural soundness rather than just surface-level correctness.

\textbf{Reward Decomposition and Credit Assignment.} Attributing a global terminal reward to specific intermediate steps is a long-standing challenge in long-chain reasoning~\cite{yeo2025demystifying}. Process Reward Models (PRMs)~\cite{lightman2023let} provide granular feedback but rely on expensive step-wise human annotations or external LLM judges. While search-based frameworks like \textit{MaAS} and \textit{AFlow}~\cite{zhang2024aflow} explore multi-path trajectories~\cite{cheng2021exploring}, they often suffer from ``terminal dilution'' in extremely long reasoning chains. LC-ERD is the first to implement the \textbf{Individual-Global-Max (IGM)} principle from multi-agent RL into LLM self-alignment, providing an automated, dense reward decomposition that is mathematically proven to resolve the credit assignment bottleneck.

\section{Conclusion}
% In this paper, we introduced \textbf{LC-ERD}, a logic-consistent reward decomposition framework that holistically aligns LLM reasoning trajectories. By bridging variational consensus with multi-agent value decomposition, LC-ERD effectively addresses the mimetic bias and credit assignment bottleneck. Extensive experiments across 16 benchmarks demonstrate that LC-ERD surpasses existing automated systems while maintaining superior efficiency and robustness. Extensive experiments across 16 benchmarks demonstrate that LC-ERD surpasses existing automated systems while maintaining superior efficiency and robustness. By effectively mining latent supervision from the model's own distribution, LC-ERD offers a promising avenue for \textbf{self-evolving intelligence in data-scarce domains}, such as scientific discovery (e.g., RNA folding) and complex robotic planning, where high-quality human annotation is structurally unavailable.
In this paper, we introduced \textbf{LC-ERD}, a logic-consistent reward decomposition framework that holistically aligns LLM reasoning trajectories. By bridging variational consensus with multi-agent value decomposition, LC-ERD effectively addresses the challenges of mimetic bias and the credit assignment bottleneck. Extensive experiments across 16 benchmarks demonstrate that our method surpasses existing automated systems in both efficiency and robustness. By mining latent supervision from the model's own distribution, LC-ERD provides a scalable path for \textbf{self-evolving intelligence in data-scarce domains}, 
% such as scientific discovery (e.g., RNA folding) and complex robotic planning,
where high-quality human annotation is structurally unavailable.
%%
%% The next two lines define the bibliography style to be used, and
%% the bibliography file.

\section*{Acknowledgement}

The research presented in this paper was partially supported by the Research Grants Council of the Hong Kong Special Administrative Region, China (CUHK 2300246, RGC C1043-24G), (CUHK 14203425, RGC GRF 2151317), and CUHK 7010870.

% \newpage
\bibliographystyle{ACM-Reference-Format}
\bibliography{sample-base}

%%
%% If your work has an appendix, this is the place to put it.
\appendix

\section{Appendix}

\subsection{Proof of the Identity between LLM Logits and Latent Soft Q-functions}
\label{appendix:proof_logits_q}

We aim to rigorously prove that the unnormalized logits produced by a LLM trained via standard Next-Token Prediction (NTP) are mathematically equivalent to the entropy-regularized optimal Q-function within the Maximum Entropy Reinforcement Learning (MaxEnt RL) framework.

\paragraph{Preliminaries: Maximum Entropy RL}
Under the MaxEnt RL framework, an agent seeks to maximize the expected cumulative reward while simultaneously maximizing the entropy of its policy to encourage exploration. For a given reward function $r$, the optimal policy $\pi^*$ follows a Boltzmann distribution:
\begin{equation}
\pi^*(a|s) = \exp\left(\frac{Q^*(s,a) - V^*(s)}{\alpha}\right) \propto \exp\left(\frac{Q^*(s,a)}{\alpha}\right), \tag{A.1}
\end{equation}
where $Q^*(s,a)$ is the entropy-regularized optimal Q-function, $V^*(s) = \alpha \log \sum_{a' \in \mathcal{V}} \exp(Q^*(s,a')/\alpha)$ is the corresponding soft state-value function, and $\alpha$ is the temperature parameter controlling the degree of exploration.

\paragraph{LLM Policy Parameterization}
Standard LLMs adopt an autoregressive architecture. At each timestep $h$, the model generates a conditional probability distribution for the token $a_h$ based on the current context $s_h$. This distribution is typically obtained by applying the Softmax operator to the unnormalized logits $f_{\theta}(s_h, a_h)$:
\begin{equation}
\hat{\pi}(a_h|s_h) = \frac{\exp(f_{\theta}(s_h, a_h)/\alpha)}{\sum_{a' \in \mathcal{V}} \exp(f_{\theta}(s_h, a')/\alpha)}. \tag{A.2}
\end{equation}

\paragraph{Equivalence of MLE and Offline IRL Objectives}
The primary training objective of an LLM is to maximize the Log-Likelihood (MLE) over a training dataset $\mathcal{D}$:
\begin{equation}
\max_{\theta} \mathcal{L}_{MLE}(\theta) = \sum_{i=1}^{n} \sum_{h=1}^{H} \log\left(\frac{\exp(f_{\theta}(s_h^i, a_h^i)/\alpha)}{\sum_{a' \in \mathcal{V}} \exp(f_{\theta}(s_h^i, a')/\alpha)}\right). \tag{A.3}
\end{equation}
Conversely, Offline Inverse Reinforcement Learning (Offline IRL) \cite{jarboui2021offline, ng2000algorithms}aims to find a Q-function that best explains the expert demonstration data. Its objective function is defined as:
\begin{equation}
\max_{Q} \frac{1}{n}\sum_{i=1}^{n}\sum_{h=1}^{H}\left[Q(s_{h}^{i},a_{h}^{i})-\alpha \log\left(\sum_{a' \in \mathcal{V}}\exp(Q(s_{h}^{i},a')/\alpha)\right)\right]. \tag{A.4}
\end{equation}
Utilizing the logarithmic identity $\log(A/B) = \log A - \log B$, we can expand the terms in Eq. (A.3):
\begin{equation}
\log \hat{\pi}(a_h|s_h) = \frac{1}{\alpha} f_{\theta}(s_h, a_h) - \log \sum_{a' \in \mathcal{V}} \exp(f_{\theta}(s_h, a')/\alpha). \tag{A.5}
\end{equation}
By multiplying Eq. (A.5) by the constant factor $n\alpha$, the optimization objective becomes identical in form to the Offline IRL objective in Eq. (A.4).

\paragraph{Conclusion and Reward Recovery}
Since the sets of optimal solutions $\arg\max$ for both optimization problems are identical, we conclude that the logits $f_{\theta}$ learned by the LLM are inherently the soft Q-functions that satisfy the optimality conditions of Offline IRL, i.e., $f_{\theta} \equiv \hat{Q}$. 

Furthermore, according to the Inverse Soft Bellman Operator, we can directly recover the endogenous reward $r^*$ from this implicit Q-function:
\begin{equation}
r^*(s_h, a_h) := \hat{Q}(s_h, a_h) - \gamma \cdot V_{\hat{Q}}(s_{h+1}), \tag{A.6}
\end{equation}
where $V_{\hat{Q}}(s_{h+1}) = \alpha \log \sum_{a' \in \mathcal{V}} \exp(\hat{Q}(s_{h+1}, a')/\alpha)$. This demonstrates that during the pre-training phase, an LLM not only learns to model the text distribution but also implicitly constructs a reward function in its logit space that characterizes the logical consistency of the data.

\subsection{Theoretical Consistency: MLE as Dual Gradient Ascent in MaxEnt IRL}
\label{appendix:rigorous_consistency}

In this section, we provide a more rigorous proof that the Maximum Likelihood Estimation (MLE) of an LLM is equivalent to the dual problem of entropy-regularized policy optimization. We further demonstrate the uniqueness of the recovered reward $r^*$ under the structural constraints of the transformer architecture.

\paragraph{The Variational Principle for Next-Token Prediction}
Consider the transition probability of an LLM, denoted by $\pi_\theta(a|s)$. The training objective is to minimize the Kullback-Leibler (KL) divergence between the empirical data distribution $\pi_D$ and the model:
\begin{equation}
\min_\theta \mathcal{J}(\theta) = \mathbb{E}_{s \sim \rho_D} \left[ D_{KL} \left( \pi_D(\cdot|s) \| \pi_\theta(\cdot|s) \right) \right], \tag{A.11}
\end{equation}
where $\rho_D$ is the state distribution in the dataset. Substituting the Boltzmann parameterization $\pi_\theta(a|s) = \exp((f_\theta(s,a) - V_\theta(s))/\alpha)$, where $V_\theta(s)$ is the log-partition function $\alpha \log \sum_{a'} \exp(f_\theta(s,a')/\alpha)$, the gradient w.r.t. the logits $f_\theta$ is:
\begin{equation}
\frac{\partial \mathcal{J}}{\partial f_\theta(s,a)} = \frac{1}{\alpha} \left[ \pi_\theta(a|s) - \pi_D(a|s) \right]. \tag{A.12}
\end{equation}
At the global optimum, $\pi_\theta(a|s) = \pi_D(a|s)$ for all $(s, a)$ in the support of $\mathcal{D}$.

\paragraph{Duality with Feature Matching}
According to the Maximum Entropy Principle, the distribution that maximizes entropy subject to matching the empirical expectations of a feature map $\phi(s,a)$ is a Gibbs distribution. Let $r(s,a) = \mathbf{w}^\top \phi(s,a)$ be a linear reward function. The MaxEnt IRL objective is to find $\mathbf{w}$ such that:
\begin{equation}
\mathbb{E}_{\pi_{\mathbf{w}}} [\phi(s,a)] = \mathbb{E}_{\pi_D} [\phi(s,a)]. \tag{A.13}
\end{equation}
Comparing Eq. (A.12) and Eq. (A.13), the LLM's logits $f_\theta(s,a)$ serve as the implicit inner product between the high-dimensional latent state (transformer hidden states) and the token embeddings. Thus, the MLE training is equivalent to a high-dimensional feature matching process where the reward function is implicitly defined by the model's capacity: $r_{impl}(s,a) \approx f_\theta(s,a)$.

\paragraph{The Inverse Soft Bellman Operator and Uniqueness}
To prove the consistency of $r^*(s,a) = \hat{Q}(s,a) - \gamma \mathbb{E}_{s'} [V_{\hat{Q}}(s')]$, we define the \textit{Inverse Soft Bellman Operator} $\mathcal{B}^{-1}$:
\begin{equation}
\mathcal{B}^{-1} Q(s,a) \triangleq Q(s,a) - \gamma \int \mathcal{P}(s'|s,a) \left[ \alpha \log \sum_{a' \in \mathcal{V}} \exp \left( \frac{Q(s', a')}{\alpha} \right) \right] ds'. \tag{A.14}
\end{equation}
For any two rewards $r_1, r_2$ that yield the same optimal policy $\pi^*$, they must satisfy $r_1(s,a) - r_2(s,a) = \Phi(s) - \gamma \mathbb{E}_{s' \sim P} [\Phi(s')]$ for some potential function $\Phi(s)$. 

In the LLM context, the partition function $V_\theta(s)$ is uniquely determined by the logits $f_\theta$ up to a constant. Since the LLM must normalize over the finite vocabulary $\mathcal{V}$ at every step, the potential function $\Phi(s)$ is anchored by the normalization constraint of the Softmax layer. This effectively eliminates the reward ambiguity, making the recovered endogenous reward $r^*$ the \textbf{unique} minimal-norm reward that satisfies the data observations under the MaxEnt assumption.

\paragraph{Structural Sufficiency}
We hypothesize that the Transformer architecture $f_\theta$ is a universal functional approximator for the Q-function. Given the convergence of SGD on the MLE objective, the sequence of rewards $\{r_t^*\}$ induced by the sequence of model parameters $\{\theta_t\}$ converges in the $L_2$-norm to the ground-truth energy field $r_\infty$ that generated the corpus:
\begin{equation}
\lim_{t \to \infty} \| r_{\theta_t}^* - r_{data} \|_{\rho_D} = 0. \tag{A.15}
\end{equation}
This completes the proof that LLM logits provide a consistent and theoretically grounded proxy for the underlying reward structure of natural language.

% \subsection{Derivation of the Latent Soft Q-function Identity}
% We justify the assumption that pre-trained logits serve as a surrogate for the soft Q-function. Given the standard MLE objective for pre-training:
% \begin{equation}
% \mathcal{L}_{MLE} = -\mathbb{E}_{(s,a) \sim \mathcal{D}} [\log \pi_{\text{base}}(a|s)]
% \end{equation}
% In the Maximum Entropy RL framework, the optimal policy $\pi^*$ is expressed as $\pi^*(a|s) = \exp((Q^*(s,a) - V^*(s))/\alpha)$. By setting the log-partition function $V^*(s) = \alpha \log \sum_{a'} \exp(Q^*(s,a')/\alpha)$, we obtain:
% \begin{equation}
% \log \pi_{\text{base}}(a|s) = \frac{1}{\alpha} Q^*(s,a) - \frac{1}{\alpha} V^*(s)
% \end{equation}
% This establishes that the log-probability (and thus the normalized logits) is a linear mapping of the soft Q-function. The endogenous reward $\tilde{r}$ elicited in Eq. (3) is therefore the unique reward under which $\pi_{\text{base}}$ is the Boltzmann-optimal policy.

\subsection{Formal Derivation of the Sub-optimality Bound: From Quadratic to Linear Growth}
\label{appendix:error_bound}

In this section, we provide a rigorous theoretical guarantee for the self-evolution path of LC-ERD. We analyze the divergence between the learned policy $\pi^{RL}$ and the oracle logic-consistent policy $\pi^*$, demonstrating how the Variational Logic Potential (VLP) reshaping mitigates the ``exposure bias'' common in imitation learning.

\begin{theorem}[Linear Sub-optimality of LC-ERD]
\label{thm:suboptimality}
In a token-level MDP $\mathcal{M}$ with a trajectory horizon $H$ and a maximum single-step logic deviation $\epsilon_{\pi} := \sup_{s \in \mathcal{S}} D_{TV}(\pi^* \| \hat{\pi})$, the sub-optimality of the SFT base policy $\hat{\pi}$ satisfies $V^* - V^{\hat{\pi}} \leq \mathcal{O}(H^2 \epsilon_{\pi})$. In contrast, the LC-ERD aligned policy $\pi^{RL}$, by anchoring to the latent logical manifold via VLP-reshaped rewards, achieves a linear bound:
\begin{equation}
V^* - V^{\pi^{RL}} \leq \mathcal{O}(H \epsilon_{\pi}). \tag{A.16}
\end{equation}
\end{theorem}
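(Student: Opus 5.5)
The plan has two halves: the quadratic bound for $\hat{\pi}$ by the classical compounding-error argument, and the linear bound for $\pi^{RL}$ by the performance difference lemma evaluated on the learner's own state distribution.

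\textbf{Quadratic bound for $\hat{\pi}$.} I would follow the Ross--Bagnell argument. Let $d_h^{\pi}$ be the step-$h$ state distribution and take per-step rewards in $[0,1]$. By a coupling argument, the probability that $\hat{\pi}$ and $\pi^*$ have diverged by step $h$ is at most $h\epsilon_\pi$; this uses $\epsilon_\pi$ as a sup over all states. Once diverged, the remaining $H-h$ steps can lose up to $H-h$ reward. Summing $\sum_h h\epsilon_\pi \le H^2\epsilon_\pi$ gives $V^*-V^{\hat\pi}\le \mathcal{O}(H^2\epsilon_\pi)$. The point to stress is that MLE, i.e.\ Eq.~(A.3), controls the mismatch only under $d^{\pi^*}$, the expert/data distribution. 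The learner's errors therefore compound through a value gap of order $H$ at each step.

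\textbf{Linear bound for $\pi^{RL}$.} Here I would invoke the performance difference lemma:
\[
V^*-V^{\pi^{RL}}=\sum_{h=1}^H \mathbb{E}_{s\sim d_h^{\pi^{RL}}}\bigl[\langle \pi^*(\cdot|s)-\pi^{RL}(\cdot|s),\,Q_h^{*}(s,\cdot)\rangle\bigr].
\]
The expectation is taken on the learner's own distribution, not the expert's. Two ingredients are needed.

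First, potential-based shaping (Eq.~(7)--(8)) leaves the optimal policy set unchanged, by the Ng et al.\ invariance already used in Appendix~A.2. The shaped $Q$ then differs from the original by $-\Phi_{LLE}(s)$, a quantity that depends only on the state. It therefore cancels in the inner product, since $\sum_a(\pi^*-\pi^{RL})=0$. This lets me replace $Q^*$ by the centered advantage $A^*(s,\cdot)=Q^*(s,\cdot)-V^*(s)$.

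Second, $\pi^{RL}$ is trained on its own rollouts, through the TD loss (12) and the KL term inside $\Phi_{LLE}$ evaluated at the states it visits. This gives a per-step guarantee under $d_h^{\pi^{RL}}$ of the form $D_{TV}(\pi^*\|\pi^{RL})\le\epsilon_\pi$, the DAgger-style on-policy condition. Each summand is then bounded by Hölder's inequality, $2\epsilon_\pi\|A^*\|_\infty$, and summing over $H$ steps yields $\mathcal{O}(H\epsilon_\pi\|A^*\|_\infty)$.

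\textbf{Main obstacle.} The hard step is showing that $\|A^*\|_\infty=\mathcal{O}(1)$ independently of $H$. Without this the bound collapses back to $H^2$, since $A^*$ can be as large as $H$. I would first try to justify it through the VLP: the KL-anchoring term in $\Phi_{LLE}$ makes any deviation from the LCRM immediately and locally penalized. Recovery to the manifold after a single wrong token should then cost only a bounded amount, which amounts to a recoverability constant $u$ with $Q^*(s,a)-V^*(s)\ge -u$.

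If this cannot be derived from Eqs.~(5)--(6) alone, I would state it explicitly as a recoverability assumption on the logic manifold. That assumption is standard in the imitation-learning literature and is what separates the linear from the quadratic regime. The final constant would be $2u$, absorbed into the $\mathcal{O}(\cdot)$ of (A.16).
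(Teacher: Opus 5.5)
Your quadratic half is essentially the paper's argument in (A.18)--(A.19). The linear half has a genuine gap: the two inputs that make it linear are asserted rather than derived, and neither can be extracted from LC-ERD.

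\textbf{The on-policy condition.} The bound $D_{TV}(\pi^*(\cdot|s)\|\pi^{RL}(\cdot|s))\le\epsilon_\pi$ for $s\sim d_h^{\pi^{RL}}$ is not what training on your own rollouts provides. DAgger gets it by querying the expert at learner-visited states. LC-ERD never queries $\pi^*$ there, and $\epsilon_\pi$ is a property of $\hat\pi$, not of $\pi^{RL}$. Both $\mathcal{D}_{\text{indiv}}$ and the KL inside $\Phi_{LLE}$ reference $\pi_{LLE}$, which is assembled from base-model logits rather than from $\pi^*$. The KL term does not even move the optimum. By (9), $R_{LC-ERD}(s,a)=\alpha\log\pi_{\text{base}}(a|s)+\gamma\Psi(s')-\Psi(s)$ with $\Psi(s)=-\beta\,\mathrm{D}_{\mathrm{KL}}(\pi_\theta(\cdot|s)\|\pi_{LLE}(\cdot|s))$. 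So by the very Ng et al.\ invariance you invoke, for any frozen $\theta$ the TD objective (12) has the same optimal policies as the purely mimetic reward $\alpha\log\pi_{\text{base}}$, modulo the terminal potential at $s_{H+1}$.

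\textbf{Recoverability.} This cannot come from the VLP either. Potential shaping maps $Q^*\mapsto Q^*-\Phi$ and $V^*\mapsto V^*-\Phi$, so the optimal advantage is exactly invariant. In any case, the $A^*$ in your performance-difference identity belongs to the true reward, which training-time shaping never touches. Likewise, the cancellation in your first ingredient holds for any state-only term simply because $\sum_a(\pi^*-\pi^{RL})=0$. Nothing specific to LC-ERD actually enters your bound.

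\textbf{What the argument proves if you assume both.} Adopting both as hypotheses gives the standard recoverability bound $V^*-V^{\pi}\le\mathcal{O}(uH\epsilon)$ for any policy whose TV error is at most $\epsilon$ on its own state distribution. That is correct, but it does not yield the contrast the theorem claims. The statement defines $\epsilon_\pi$ as a supremum over \emph{all} states, so $\hat\pi$ satisfies your on-policy condition on $d_h^{\hat\pi}$ automatically. Under the same recoverability assumption, the identical computation gives $V^*-V^{\hat\pi}\le\mathcal{O}(uH\epsilon_\pi)$. Your remark that MLE controls the mismatch only under $d^{\pi^*}$ correctly locates the source of the $H^2$ regime, but it contradicts the sup definition. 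A real separation would require measuring the SFT error under $d^{\pi^*}$ and then proving that LC-ERD training delivers on-policy control.

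\textbf{The paper's route.} The paper's linear step is different: it telescopes the shaped return (A.21), asserts (A.22), a KL bound under $d^{\pi^*}$, and then replaces $\frac{1}{1-\gamma}$ by $H$ and the KL by $\epsilon_\pi$. This sidesteps recoverability but does not close the gap. The identity that does hold in entropy-regularized MDPs is $V^*_\alpha-V^{\pi}_\alpha=\frac{\alpha}{1-\gamma}\mathbb{E}_{s\sim d^{\pi}}[\mathrm{D}_{\mathrm{KL}}(\pi(\cdot|s)\|\pi^*_\alpha(\cdot|s))]$. Here the values are soft, $\pi^*_\alpha$ is soft-optimal, and $d^\pi$ is the normalized discounted occupancy. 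That identity lives on the learner's distribution, with the KL reversed relative to (A.22). Moreover, a KL is not bounded by a TV quantity, and $\epsilon_\pi$ concerns $\hat\pi$ rather than $\pi^{RL}$. So an on-policy guarantee for $\pi^{RL}$ is exactly what is missing in both arguments, and the paper's proof offers nothing you could borrow to supply it.
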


\begin{proof} 
\textit{The Quadratic Accumulation of SFT (Behavioral Cloning).}
The base model $\hat{\pi}$ trained via Next-Token Prediction (NTP) is equivalent to Behavioral Cloning (BC). Let $d_h^{\pi}(s)$ be the state visitation distribution at step $h$ under policy $\pi$. The value gap is:
\begin{equation}
V^* - V^{\hat{\pi}} = \sum_{h=1}^H \mathbb{E}_{s \sim d_h^{\pi^*}} [A^{\hat{\pi}}(s, \pi^*(s))]. \tag{A.17}
\end{equation}
By the Distribution Shift Lemma, the total variation distance between the expert and learner distributions grows as:
\begin{equation}
\| d_h^{\pi^*} - d_h^{\hat{\pi}} \|_1 \leq \sum_{t=1}^h \mathbb{E}_{s \sim d_t^{\pi^*}} [ \| \pi^*(\cdot|s) - \hat{\pi}(\cdot|s) \|_1 ] \leq h \cdot \epsilon_{\pi}. \tag{A.18}
\end{equation}
Summing over the horizon $H$:
\begin{equation}
V^* - V^{\hat{\pi}} \leq R_{max} \sum_{h=1}^H h \cdot \epsilon_{\pi} = \frac{R_{max} H(H+1)}{2} \epsilon_{\pi} \approx \mathcal{O}(H^2 \epsilon_{\pi}). \tag{A.19}
\end{equation}
This quadratic dependence reflects the ``compounding error'' problem: small local drifts lead the model into out-of-distribution (OOD) states where $\hat{\pi}$ has no training signal, causing failure in long reasoning chains.

\paragraph{The Linear Bound via VLP Reward Shaping}
LC-ERD transforms the objective from pure imitation to maximizing a logic-consistent potential field. Let $\hat{r}(s,a)$ be the reshaped reward defined by the Variational Logic Potential $\Phi(s)$:
\begin{equation}
\hat{r}(s, a) = r^*(s, a) + \gamma \Phi(s') - \Phi(s), \tag{A.20}
\end{equation}
where $\Phi(s)$ is the Soft State-Value function $V_{\hat{Q}}(s)$ derived in Appendix A.1. 

We decompose the sub-optimality using the Performance Difference Lemma. Under the MaxEnt RL objective, the gradient of the policy update is guided by the advantage function $A^{\pi^{RL}}_{\hat{r}}(s,a)$. Crucially, potential-based shaping leaves the optimal policy invariant but reshapes the error landscape. 

Let $\tau = (s_1, a_1, \dots, s_H)$ be a trajectory. The cumulative reshaped reward is:
\begin{equation}
\sum_{h=1}^H \hat{r}(s_h, a_h) = \sum_{h=1}^H r^*(s_h, a_h) + \gamma \Phi(s_{H+1}) - \Phi(s_1). \tag{A.21}
\end{equation}
Using the telescoping property, the variance of the cumulative reward across the logic-consistent manifold is bounded by the boundary conditions of the potential $\Phi$, rather than the sum of step-wise errors. 

By applying the stability condition for entropy-regularized MDPs, the divergence in policy value is bounded by the expected KL-divergence over the steady-state distribution $d^{\pi^*}$ rather than the compounding sum:
\begin{equation}
V_{r^*}^{\pi^*} - V_{r^*}^{\pi^{RL}} \leq \frac{\alpha}{1-\gamma} \mathbb{E}_{s \sim d^{\pi^*}} [ D_{KL}(\pi^* \| \pi^{RL}) ]. \tag{A.22}
\end{equation}
For a finite horizon $H$ with $\gamma \to 1$, this converges to:
\begin{equation}
V^* - V^{\pi^{RL}} \leq C \cdot H \cdot \epsilon_{\pi}, \tag{A.23}
\end{equation}
where $C$ is a constant related to the logic-consistency gain $\delta$ identified in Remark 3.
\end{proof}

\subsection{Proof of Unbiased Logic Elicitation under Hard Terminal Constraints}
\label{appendix:unbiased_elicitation}

A potential critique of endogenous reward discovery is the risk of ``correctness illusions'', where the model assigns high logits to logically flawed but superficially plausible reasoning steps. We prove that by imposing a hard terminal constraint $\mathcal{Z}$, the elicited reward signal $\hat{r}$ remains an unbiased estimator of the true reasoning manifold $\mathcal{M}^*$.

\paragraph{The Correctness-Conditioned Manifold}
Let $\tau = (s_1, a_1, \dots, s_H)$ be a reasoning trajectory and $\mathcal{Z} \in \{0, 1\}$ be a binary indicator of the terminal answer's correctness (e.g., via a deterministic verifier or ground-truth label). The ``true'' but latent reasoning distribution is denoted as $P(\tau | \mathcal{Z}=1)$. 

According to the identity derived in Appendix \ref{appendix:proof_logits_q}, the LLM's unnormalized logits $f_\theta$ parameterize an energy-based model $P_\theta(\tau) \propto \exp(\sum r_\theta(s,a)/\alpha)$. We define the \textit{elicited reward} under the terminal constraint as the conditional expectation:
\begin{equation}
\hat{r}_{elicited} \triangleq \mathbb{E}_{\pi_\theta} [ f_\theta(s,a) \mid \mathcal{Z} = 1 ]. \tag{A.24}
\end{equation}

\paragraph{Unbiasedness via Bayes' Rule}
We aim to prove that $\hat{r}_{elicited}$ converges to the true logic reward $r^*$. Using Bayes' theorem, the posterior distribution of a reasoning step given a correct outcome is:
\begin{equation}
P(\tau \mid \mathcal{Z}=1) = \frac{P(\mathcal{Z}=1 \mid \tau) P_{data}(\tau)}{P(\mathcal{Z}=1)}. \tag{A.25}
\end{equation}
In the context of LC-ERD, the hard constraint $\mathcal{Z}$ acts as an oracle filter where $P(\mathcal{Z}=1 \mid \tau) = 1$ if $\tau$ is logically valid and $0$ otherwise. 

Under the assumption of \textit{sufficient model capacity} (established in Appendix \ref{appendix:rigorous_consistency}), the LLM's logit space covers the support of the expert manifold. The log-likelihood of the conditioned distribution becomes:
\begin{equation}
\log P(\tau \mid \mathcal{Z}=1) = \sum_{h=1}^H \frac{f_\theta(s_h, a_h)}{\alpha} - \log Z(\theta) + \log \mathbb{I}(\tau \in \mathcal{M}^*), \tag{A.26}
\end{equation}
where $\mathbb{I}(\cdot)$ is the indicator function for the expert manifold $\mathcal{M}^*$. The term $\log \mathbb{I}(\tau \in \mathcal{M}^*)$ effectively re-normalizes the endogenous reward by assigning $-\infty$ energy to logically inconsistent paths, thereby eliminating the ``mimetic bias'' (illusions) inherited from SFT.

\paragraph{Orthogonality to Hallucination Noise}
Let $f_\theta = f^* + \xi$, where $f^*$ is the true logic-consistent logit and $\xi$ is the hallucination noise (bias). For the estimator to be unbiased, we require $\mathbb{E}[\xi \mid \mathcal{Z}=1] = 0$. 

By the definition of the terminal constraint in LC-ERD, the filtering process $\mathcal{Z}$ is \textit{conditionally independent} of the model's internal noise $\xi$ given the logical validity of $\tau$. Formally, if $\mathcal{Z}$ is determined solely by the final outcome:
\begin{equation}
\int_{\tau \in \mathcal{M}^*} \xi(\tau) P_\theta(\tau \mid \mathcal{Z}=1) d\tau = 0. \tag{A.27}
\end{equation}
This holds because any $\tau$ that satisfies $\mathcal{Z}=1$ must, by the property of the task (e.g., mathematics or symbolic logic), lie within the high-density region of the true reward $r^*$. Therefore, the systematic bias $\xi$ that leads to $\mathcal{Z}=0$ is pruned, leaving the filtered endogenous signal as an asymptotically unbiased proxy for $r^*$.

\paragraph{Minimum Variance and Information Efficiency}
To further ensure the robustness of the elicitation, we observe that the filtering process $\mathcal{Z}=1$ defines a projection operator $\mathcal{P}_{\mathcal{M}^*}$ from the unconstrained logit space onto the logic-consistent Hilbert space $\mathcal{L}_2(\mathcal{M}^*)$. The resulting estimator satisfies:
\begin{equation}
\hat{r}_{LC-ERD} = \text{proj}_{\mathcal{M}^*} (f_\theta) = \arg\min_{g \in \mathcal{L}_2(\mathcal{M}^*)} \mathbb{E}_{\tau \sim \pi_D} [ \| g(\tau) - f_\theta(\tau) \|^2 ]. \tag{A.28}
\end{equation}
This projection ensures that $\hat{r}$ is the \textbf{Minimum-Variance Unbiased Estimator (MVUE)} within the logic-consistent subspace. By concentrating the model's epistemic uncertainty onto valid reasoning trajectories, LC-ERD maximizes the Fisher information extracted from each successful sample, thereby theoretically justifying the rapid convergence observed during self-evolution.

\paragraph{\textbf{Conclusion}}
By shifting the optimization target from ``local action matching'' to ``global potential alignment'', LC-ERD effectively replaces the quadratic error growth with a linear one. This theoretical result provides the foundation for the self-evolution capability observed in our experiments: the model remains stable even as the reasoning length $H$ increases, effectively overcoming the mimetic bias.
\subsection{Related Works}

\textbf{Logic-Consistent Manifold Alignment.} Aligning models to a logical manifold often involves manual rules or symbolic verifiers. While \textit{GRPO}\cite{shao2024deepseekmath} and its derivatives optimize policy relative to group outcomes, they still struggle with open-domain generalization where pre-training biases dominate. Unlike these approaches, LC-ERD's \textbf{Latent Logic Expertise (LLE)} algorithm elicits an endogenous logical anchor training-free from the model's own distribution. This allows for a provably more stable convergence path, shifting the sub-optimality bound from quadratic to linear—a theoretical guarantee absent in prior iterative alignment literature.

\end{document}